\theoremstyle{plain}
\newtheorem{theorem}{Theorem}[section]
\newtheorem{proposition}[theorem]{Proposition}
\newtheorem{lemma}[theorem]{Lemma}
\newtheorem{corollary}[theorem]{Corollary}
\theoremstyle{definition}
\newtheorem{definition}[theorem]{Definition}
\theoremstyle{remark}
\definecolor{matplotlibC0}{HTML}{1F77B4}
\definecolor{matplotlibgreen}{HTML}{008000}
\newcommand{\blueline}{%
  \tikz[baseline=-0.6ex]{\draw[blue, line width=1.5pt] (0,0) -- (0.6,0);}%
}
\newcommand{\redline}{%
  \tikz[baseline=-0.6ex]{\draw[red, line width=1.5pt] (0,0) -- (0.6,0);}%
}
\newcommand{\dashedorangeline}{%
  \tikz[baseline=-0.6ex]{\draw[dashed, orange, line width=1.5pt] (0,0) -- (0.6,0);}%
}
\newcommand{\dashedgreenline}{%
  \tikz[baseline=-0.6ex]{\draw[dashed, matplotlibgreen, line width=1.5pt] (0,0) -- (0.6,0);}%
}
\icmltitlerunning{Rethinking External Slow-Thinking: From Snowball Errors to Probability of Correct Reasoning}
\begin{document}

\twocolumn[
\icmltitle{Rethinking External Slow-Thinking: \\From Snowball Errors to Probability of Correct Reasoning}
\begin{icmlauthorlist}
\icmlauthor{Zeyu Gan}{ruc,keylab,moe}
\icmlauthor{Yun Liao}{tust}
\icmlauthor{Yong Liu}{ruc,keylab,moe}
\end{icmlauthorlist}

\icmlaffiliation{ruc}{Gaoling School of Artificial Intelligence, Renmin University of China, Beijing, China}
\icmlaffiliation{keylab}{Beijing Key Laboratory of Research on Large Models and Intelligent Governance}
\icmlaffiliation{moe}{Engineering Research Center of Next-Generation Intelligent Search and Recommendation, MOE}
\icmlaffiliation{tust}{College of Artificial Intelligence, Tianjin University of Science and Technology, Tianjin, China}

\icmlcorrespondingauthor{Yong Liu}{liuyonggsai@ruc.edu.cn}





\icmlkeywords{large language models, test-time scaling, information theory}

\vskip 0.3in
]



\printAffiliationsAndNotice{}  

\begin{abstract}
    Test-time scaling, which is also often referred to as \textit{slow-thinking}, has been demonstrated to enhance multi-step reasoning in large language models (LLMs). 
    However, despite its widespread utilization, the mechanisms underlying slow-thinking methods remain poorly understood. 
    This paper explores the mechanisms of external slow-thinking from a theoretical standpoint. 
    We begin by examining the snowball error effect within the LLM reasoning process and connect it to the likelihood of correct reasoning using information theory. 
    Building on this, we show that external slow-thinking methods can be interpreted as strategies to mitigate the error probability. 
    We further provide a comparative analysis of popular external slow-thinking approaches, ranging from simple to complex, highlighting their differences and interrelationships. 
    Our findings suggest that the efficacy of these methods is not primarily determined by the specific framework employed, and that expanding the search scope or the model's internal reasoning capacity may yield more sustained improvements in the long term. 
    We open-source our code at \url{https://github.com/ZyGan1999/Snowball-Errors-and-Probability}. 
\end{abstract}
\section{Introduction}
\label{sec:introduction}
\begin{figure*}[tp]
    \vskip 0.2in
    \begin{center}
    \centerline{\includegraphics[width=\linewidth]{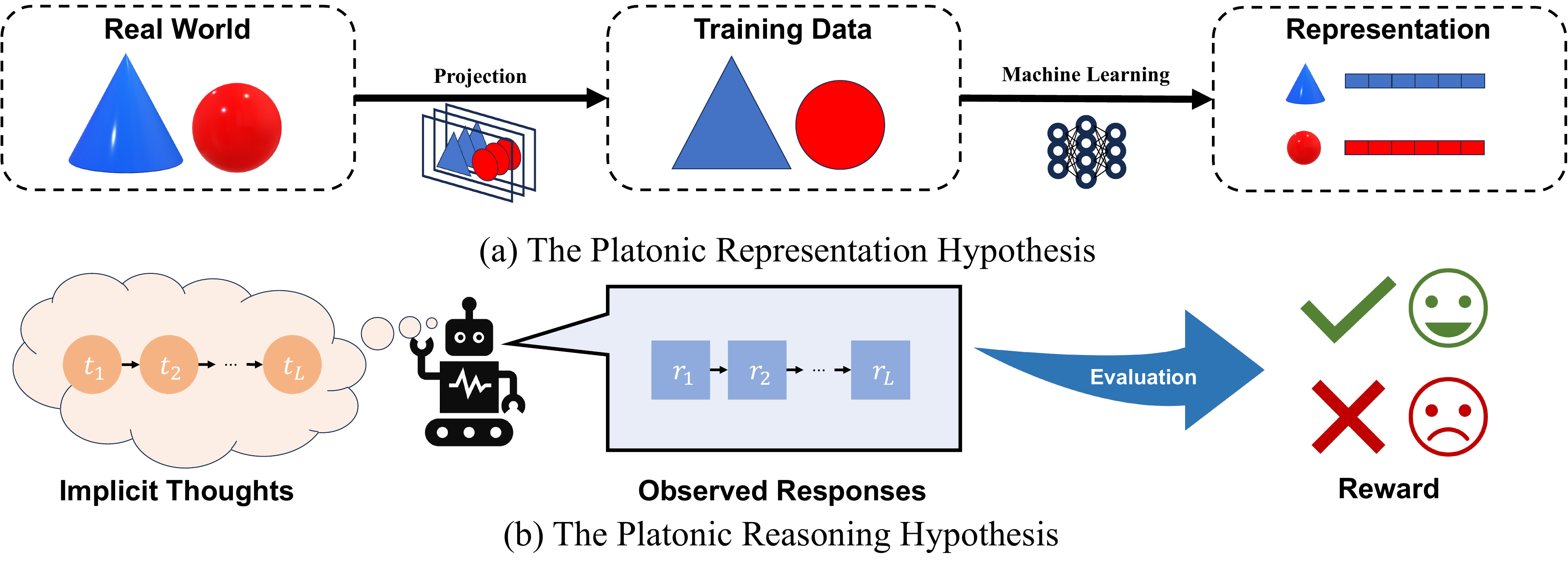}}
    \caption{Two examples of Plato's allegory of the cave in artificial intelligence. 
    (a) The training data accessible to the model are just shadows on the wall, serving as projections of the real world. 
    (b) The responses of the LLM we observe are also shadows on the wall, reflecting the model’s implicit reasoning thoughts during task execution. } 
    \label{fig:Platonic}
    \end{center}
    \vskip -0.2in
\end{figure*}
Scaling laws~\cite{kaplan2020scaling} have been widely accepted as a guiding principle in the development of large language models (LLMs), 
indicating that the performance of LLMs improves with the growth of model size and training data. Over the past few years, the trend in this field has been toward expanding the scale of training phase, 
resulting in significant performance improvements~\cite{yuan2023scaling,zelikman2022star,rafailov2024direct}. However, the marginal gains in model performance diminish as scale increases, and training more powerful models necessitates a substantial rise in investment. 
Consequently, recent researches have shifted focus to scaling strategies beyond model size, including optimizations during the post-training phase and even at the test-time stage~\cite{snell2024scaling}. 

Following the release of LLMs with remarkable reasoning capabilities, such as OpenAI’s o1~\yrcite{openai2024reasoning}, DeepSeek’s R1~\yrcite{deepseekai2025deepseekr1incentivizingreasoningcapability}, and Qwen’s QwQ~\yrcite{qwq-32b-preview}, 
it has become widely acknowledged that scaling the inference process of LLMs offers a promising avenue for further enhancing model performance. 
Specifically, empirical studies have shown that the reasoning quality of LLMs improves with extended inference time~\cite{lightman2023let}. 
This observation has sparked a new research trajectory focused on augmenting the reasoning abilities of LLMs by increasing inference costs during the test-time phase, 
a concept referred to as \textit{test-time scaling}, or more colloquially, \textit{slow-thinking}. 

Test-time scaling strategies can be generally classified into two primary approaches: \textit{internal} and \textit{external} slow-thinking~\cite{jiang2024technical,min2024imitate}. 
Internal slow-thinking involves adjusting model parameters through additional training on specifically designed reasoning tasks, aiming to inherently extend the model’s output length and thereby enhance its reasoning capabilities. 
In contrast, external slow-thinking focuses on increasing inference costs by introducing additional computational steps, such as re-sampling or re-generating model outputs multiple times~\cite{brown2024large}, thereby prolonging inference time and improving reasoning quality.

\textbf{This paper focuses on external slow-thinking techniques,} which are inspired by human cognitive processes. When facing complex questions, humans often take extra time to reflect and refine their intermediate answers, leading to greater accuracy. Similarly, external slow-thinking methods, such as the Best-of-N (BoN) strategy, draw multiple samples and evaluate them using techniques like majority voting or ranking~\cite{cobbe2021training}. Beyond simpler methods, advanced frameworks like CoT~\cite{wei2022chain}, ToT~\cite{yao2024tree}, and MCTS-based approaches inspired by AlphaGo~\cite{silver2016mastering} explore solution spaces in tree structures to identify optimal answers~\cite{zhang2024rest,feng2023alphazero}.

Despite their promise, external slow-thinking methods face several challenges.
\textbf{First,} the mechanisms behind their effectiveness remain poorly understood, hindering the design of more advanced and efficient strategies.
\textbf{Second,} practical implementations of complex slow-thinking techniques often achieve limited success unless significant computational resources are added. This is due to the difficulty of optimizing design choices and hyperparameters, which frequently results in suboptimal performance.
To address these challenges, we propose a systematic framework based on information theory, linking external slow-thinking methods to the probability of correct reasoning in LLMs.

We begin by analyzing the snowball errors in LLM reasoning in \cref{sec:snowball}, and subsequently relate this effect to the likelihood of reasoning errors in \cref{sec:snowball_to_probability}. 
In \cref{sec:correct_probablity_of_slow_thinking}, we further explore the probability of correct reasoning within the the real-world practice, and compare various slow-thinking strategies in \cref{sec:comparison_between_slow-thinking_framework}. 
Finally, we review relevant literature in \cref{sec:related_work} and present our conclusions in \cref{sec:conclusion}.

\section{Snowball Errors in LLM Reasoning}
\label{sec:snowball}
Imagine rolling a snowball on a snowy surface during winter. As the distance increases, the snowball grows at an accelerating rate. This “snowball effect” illustrates how small changes can compound over time. In the context of LLMs, this effect first manifests as the progressive accumulation of token-level errors in auto-regressive next-token prediction (NTP) tasks, potentially causing significant deviations from the expected or golden answers~\cite{bachmann2024pitfalls}.

For reasoning tasks, however, the snowball effect shifts to the sentence level, making the errors more challenging to characterize. To understand these errors, it is critical to first examine the nature of reasoning. Prior research suggests that LLM reasoning can be conceptualized as executing a sequence of primitive tasks at each reasoning step~\cite{ton2024understanding}, prompting further investigation into how such errors accumulate across the reasoning process.

Let's reconsider Plato’s allegory of the cave\footnote{In this allegory, Plato describes individuals confined to a cave, learning about the world solely through the shadows on its wall.}, which has been widely used to highlight the limitations of AI models~\cite{pmlr-v235-huh24a}. In this analogy, training data serve as mere projections of the real world, akin to shadows on the wall, as illustrated in~\cref{fig:Platonic}(a). Similarly, in LLM reasoning, generated responses are the shadows, reflecting the model’s implicit reasoning processes, as illustrated in~\cref{fig:Platonic}(b).

For example, when solving a problem like “Calculate $3x + 2y$,” the model implicitly executes reasoning steps such as $t_1$: \{Calculate $3x$\} $\rightarrow$ $t_2$: \{Calculate $2y$\} $\rightarrow$ $t_3$: \{Add $3x$ and $2y$\}. However, these steps are abstract and cannot be directly observed in outputs. Instead, the response sequence $r_1 \rightarrow r_2 \rightarrow r_3$ can be multiple possible expressions of the same reasoning process. 
\textbf{Moreover, since individual responses $r_l$ cannot fully encapsulate the corresponding steps $t_l$, minor inaccuracies accumulate, ultimately leading to significant snowball errors.}

To quantify snowball errors in LLM reasoning, we consider mutual information (MI) between the implicit reasoning sequence $\bm{t}$ and the observed response sequence $\bm{r}$, denoted as $I(\bm{t}; \bm{r})$. This metric captures the shared information between the two sequences. Furthermore, the minor inaccuracies in the responses at each reasoning step can be assessed as information loss, which can be quantified by the difference between the MI $I(\bm{t};\bm{r})$ and the information entropy of the implicit thoughts $\bm{t}$, denoted as $H(\bm{t})$. 
And it can be mathematically defined as: 
\begin{definition}
    (Information loss.) Given a reasoning process with implicit thoughts $\bm{t}$ and corresponding responses $\bm{r}$, the information loss in the $l$-th step is defined as: 
    $$\text{InfoLoss}(r_l) = H(t_l) - I(t_l; r_l) = H(t_l | r_l).$$ 
\end{definition}

The snowball errors can be further defined as the accumulation of information loss across all reasoning steps as follows:
\begin{definition}
    (Snowball errors, or cumulative information loss.) Given a reasoning process with implicit thoughts $\bm{t}$ and corresponding responses $\bm{r}$, the snowball errors in the $l$-th step are defined as: 
    $$H_{<l}(\bm{t} | \bm{r}) = \sum_{i}^{l-1} H(t_i | r_i),$$
    where $l$ denotes the number of reasoning steps. 
\end{definition}

\section{From Snowball Errors to Probability}
\label{sec:snowball_to_probability}

\begin{figure*}[tp]
    \vskip 0.2in
    \begin{center}
    \centerline{\includegraphics[width=\linewidth]{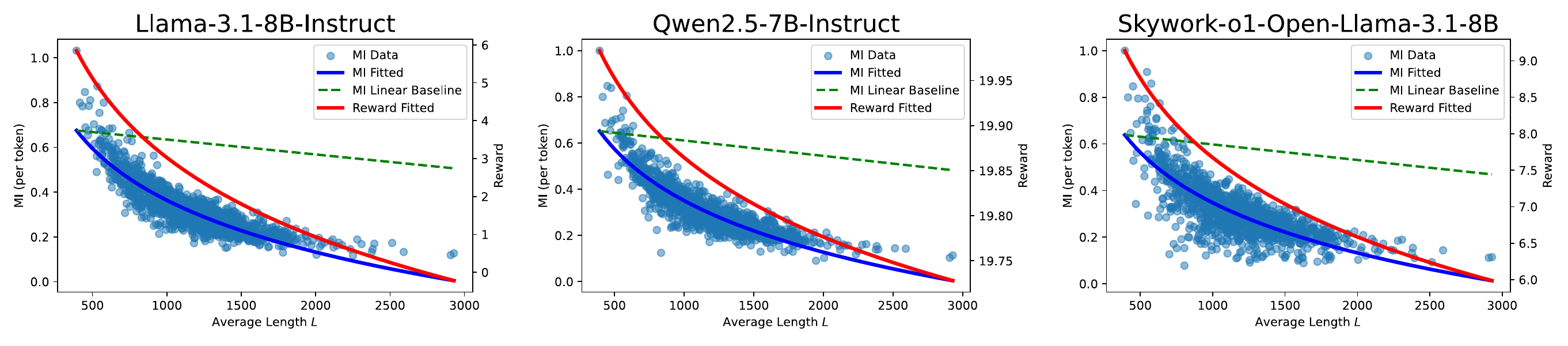}}
    \caption{The estimated MI and reward of the responses generated by different LLMs on GSM8k. The x-axis denotes the average length of the responses. The left y-axis represents the estimated MI between each response and the golden answer, whereas the right y-axis shows the corresponding reward. 
    “\textcolor{matplotlibC0}{\ensuremath{\bullet}}” indicates the estimated MI for individual questions, while “\blueline” and “\redline” respectively depict the fitted curves of the estimated MI and reward. ``\dashedgreenline'' depicts the linear baseline.} 
    \label{fig:MI-and-Reward}
    \end{center}
    \vskip -0.2in
\end{figure*}

In this section, we seek to establish a theoretical connection between snowball errors and the probability of reasoning errors in LLMs. 
We begin by formally defining the probability of reasoning errors and subsequently derive a lower bound for this probability using principles from information theory. 
Finally, we empirically validate the presence of snowball errors in the reasoning processes of LLMs. 

\subsection{Probability of Reasoning Errors}
As the reasoning path grows longer, snowball errors accumulate, leading to significant factual inaccuracies, which we define as reasoning errors. This subsection explores the relationship between snowball errors and the probability of reasoning errors. 

To evaluate reasoning errors, we first define them clearly. Since the response $r_l$ represents the implicit thought $t_l$, a natural approach is to assess whether a sufficiently powerful mapping function $f$ can reconstruct $t_l$ from $r_l$.

\begin{proposition}
\label{prop:reasoning_error}
(Probability of reasoning errors.) Let $t_l$ denote an implicit thought at step $l$, and $r_l$ represent the corresponding generated response. 
Given a predicted thought $\hat{t}_l$ derived from $r_l$ using a prediction function $\hat{t}_l = f(r_{l})$, 
the probability of reasoning error at step $l$ is defined as the likelihood of the event $e_l$, where $e_l : \hat{t}_l \neq t_l$. 
This probability is denoted as $P(e_l)=P(\hat{t}_l \neq t_l)$. 
\end{proposition}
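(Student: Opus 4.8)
The statement is, in effect, a definition rather than an assertion with a nontrivial conclusion, so "proving" it means pinning down the probabilistic objects it names and checking they are well posed. The plan is to first fix the sample space: the implicit thoughts $\bm{t}$ and the observed responses $\bm{r}$ are jointly distributed according to the generative process of the LLM, with each step-$l$ thought $t_l$ taking values in some thought space $\mathcal{T}_l$ (treated, for now, as discrete). Next I would note that the prediction function $f$ is an arbitrary measurable map $\mathcal{R}_l \to \mathcal{T}_l$, so $\hat{t}_l = f(r_l)$ is a genuine random variable on the same space, the event $e_l : \{\hat{t}_l \neq t_l\}$ is measurable, and hence $P(e_l) = P(\hat{t}_l \neq t_l) = \mathbb{E}[\mathbf{1}\{f(r_l) \neq t_l\}]$ exists and is exactly the quantity named in the statement. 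That settles the meaning of "probability of reasoning error."

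The substantive content—and the reason the statement invokes a \emph{sufficiently powerful} mapping function—is that the object of interest is the error of the best possible reconstruction of $t_l$ from $r_l$. I would therefore immediately specialize to the Bayes-optimal (MAP) predictor $f^\star(r_l) = \arg\max_{t} P(t_l = t \mid r_l)$, which minimizes $P(e_l)$ over all $f$; this is the version that connects back to the earlier definitions, since whenever $\text{InfoLoss}(r_l) = H(t_l | r_l) > 0$ the thought cannot be recovered deterministically and one expects $P(e_l) > 0$. The precise link is Fano's inequality, $H_b(P(e_l)) + P(e_l)\log(|\mathcal{T}_l| - 1) \ge H(t_l | r_l) = \text{InfoLoss}(r_l)$, where $H_b$ is the binary entropy; this delivers the promised information-theoretic lower bound on $P(e_l)$ directly in terms of the per-step snowball-error ingredient. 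So I would present the proof of the proposition as a three-line reduction: define $P(e_l)$, restrict attention to $f^\star$, then invoke Fano.

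The main obstacle is that the thought space $\mathcal{T}_l$ is not a finite alphabet in any concrete sense—implicit reasoning steps are latent and could be continuous or unbounded—whereas the vanilla Fano bound requires $|\mathcal{T}_l| < \infty$. I would handle this in one of two ways: either (a) posit a finite or countable quantization of thoughts at each step, which is consistent with the "sequence of primitive tasks" view cited earlier and makes $|\mathcal{T}_l|$ a legitimate finite parameter of the bound; or (b) replace hard Fano by a continuous/generalized Fano-type inequality with a reference measure, at the cost of a messier constant. A secondary subtlety is that the LLM's generative model is a joint law over whole sequences $(\bm{t}, \bm{r})$, so the per-step claim implicitly conditions on the reasoning process reaching step $l$; I would make this conditioning explicit, so that the $H(t_l | r_l)$ appearing in the bound is exactly the step-$l$ conditional entropy that shows up as a summand in $H_{<l}(\bm{t} | \bm{r})$, keeping the later accumulation-of-errors argument coherent.
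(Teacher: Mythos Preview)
Your diagnosis is correct: Proposition~\ref{prop:reasoning_error} is a definition, not an assertion with a conclusion to be derived, and the paper treats it exactly that way---no proof is given or needed. Your well-posedness check (that $\hat{t}_l = f(r_l)$ is a bona fide random variable and $\{\hat{t}_l \neq t_l\}$ a measurable event) is a reasonable elaboration the paper simply omits.

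The Fano material you go on to sketch is actually the content of the \emph{subsequent} result, Theorem~\ref{thm:lower-bound-of-probability-of-reasoning-error}, not of this proposition. One small divergence worth noting: you propose specializing to the Bayes-optimal predictor $f^\star$, but the paper's Fano argument there works for an arbitrary $f$, since the data-processing inequality on the Markov chain $t_l \to r_l \to \hat{t}_l$ yields $H(t_l \mid \hat{t}_l) \ge H(t_l \mid r_l)$ regardless of which $f$ is used. So the restriction to $f^\star$ is unnecessary (though harmless, as the bound is tightest there). Your concerns about finiteness of $\mathcal{T}_l$ and the implicit conditioning on reaching step $l$ are legitimate subtleties the paper does not address.
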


To estimate the probability of reasoning errors, we propose utilizing information theory to establish a connection between snowball errors and the likelihood of reasoning errors. 
Specifically, our analysis start from the following lemma: 
\begin{lemma}\label{lm:information-loss-inequality}
    (Information loss inequality.) Given a reasoning process defined above, and under the assumption that the mutual information $I(t_l;r_l)$ decreases with respect to $l$ when $l \geq 2$, the information loss in the $l$-th step satisfies: 
        $$H(t_{l}|r_{l}) \geq \frac{H_{<l}(\bm{t} | \bm{r})}{l-1}.$$ 
\end{lemma}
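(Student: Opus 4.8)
The plan is to reduce the claimed bound to a monotonicity statement about the per-step information-loss sequence $a_i := H(t_i \mid r_i)$ and then apply the elementary fact that the last entry of a non-decreasing finite sequence is at least the average of all preceding entries. After multiplying through by $l-1>0$, the target $H(t_l \mid r_l) \ge H_{<l}(\bm{t}\mid\bm{r})/(l-1)$ is equivalent to $(l-1)\,a_l \ge \sum_{i=1}^{l-1} a_i$, so it is enough to show $a_i \le a_l$ for each $i \in \{1,\dots,l-1\}$; then $\sum_{i=1}^{l-1} a_i \le (l-1)\,a_l$ follows at once.

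To obtain the monotonicity I would rewrite $a_i$ using the identity already recorded with the definition of information loss, $a_i = H(t_i\mid r_i) = H(t_i) - I(t_i;r_i)$. The lemma's hypothesis — that $I(t_l;r_l)$ is decreasing in $l$ for $l \ge 2$ — furnishes precisely the consecutive comparisons $I(t_{i+1};r_{i+1}) \le I(t_i;r_i)$; treating the marginal entropies of the primitive thoughts as constant across steps (or at least non-decreasing), so that the $H(t_i)$ terms cancel or only reinforce the inequality, this yields $a_i \le a_{i+1}$ and hence, by chaining, $a_1 \le a_2 \le \cdots \le a_l$. Combining, $a_i \le a_{l-1} \le a_l$ for every $i \le l-1$, which closes the argument. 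Equivalently the finishing step can be phrased as an induction on $l$: the base case $l=2$ is exactly $a_1 \le a_2$, and the inductive step uses $H_{<l+1}(\bm{t}\mid\bm{r}) = H_{<l}(\bm{t}\mid\bm{r}) + a_l \le (l-1)a_l + a_l = l\,a_l \le l\,a_{l+1}$.

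The one place that needs care, rather than a wave of the hand, is the passage from ``$I(t_l;r_l)$ decreases'' to ``$a_l = H(t_l\mid r_l)$ increases'': this is immediate only once one commits to the modelling convention that the marginal entropy $H(t_l)$ of a reasoning step does not shrink from step to step (the ``primitive task'' abstraction of the reasoning process is what licenses this), and I would state that assumption explicitly alongside the hypothesis written in the lemma. A secondary point worth a sentence is the boundary index: the inequality is meaningful only for $l \ge 2$, and its $l=2$ instance already requires the $i=1 \to i=2$ comparison, so the hypothesis must be understood to govern every consecutive pair — including the step into $l=2$ — not merely asymptotic decay of the mutual information.
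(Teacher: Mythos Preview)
Your proof is correct and follows essentially the same route as the paper's: both rewrite $H(t_i\mid r_i)=H(t_i)-I(t_i;r_i)$, combine the decreasing-MI hypothesis with an (unstated in the lemma) non-decreasing-$H(t_i)$ assumption---which the paper invokes in its proof as ``the LLM is continuously introducing new information''---and conclude that the last conditional entropy dominates the average of the previous ones. The only cosmetic difference is that you establish full monotonicity of $a_i$ and then bound the sum, whereas the paper compares $I(t_l;r_l)$ and $H(t_l)$ directly to the running averages of their predecessors; your explicit flagging of the entropy-monotonicity assumption and of the $l=2$ boundary case is well taken.
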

The proof is provided in \cref{app:proof-information-loss-inequality}. 
\cref{lm:information-loss-inequality} indicates that the information loss in the $l$-th step is bounded by the average snowball errors in the previous steps. 
Based on this lemma, we can subsequently derive the lower bound of the probability of reasoning errors. 
\begin{theorem}\label{thm:lower-bound-of-probability-of-reasoning-error}
    (Lower bound of $P(e_l)$.) Given a reasoning process and conditions defined above, when $l \geq 2$, the probability of reasoning error at step $l$ satisfies: 
    $$
    P(e_l) \geq \log^{-1}(|\mathcal{T}_l|-1)\left[ \frac{H_{<l}(\bm{t} | \bm{r})}{l-1} - H_b(e_l) \right], 
    $$
    where $|\mathcal{T}_l|$ is the size of the support of $t_l$. $H_b(e_l)$ is the entropy of the indicator random variable of event $e_l$, which is a relatively small constant. 
\end{theorem}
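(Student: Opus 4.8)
The plan is to derive the bound from Fano's inequality applied to the prediction task at step $l$. Recall that Fano's inequality states that for predicting $t_l$ from $r_l$ via $\hat t_l = f(r_l)$ with error event $e_l$, one has
\[
H(t_l \mid r_l) \le H_b(e_l) + P(e_l)\log(|\mathcal{T}_l| - 1),
\]
where the term $\log(|\mathcal{T}_l|-1)$ arises because, conditioned on an error, $t_l$ takes one of at most $|\mathcal{T}_l|-1$ remaining values. Rearranging immediately gives
\[
P(e_l) \ge \frac{H(t_l \mid r_l) - H_b(e_l)}{\log(|\mathcal{T}_l| - 1)} = \log^{-1}(|\mathcal{T}_l|-1)\left[ H(t_l \mid r_l) - H_b(e_l) \right].
\]

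The second step is to invoke \cref{lm:information-loss-inequality}, which under the stated monotonicity assumption on $I(t_l; r_l)$ gives $H(t_l \mid r_l) \ge \tfrac{1}{l-1} H_{<l}(\bm{t} \mid \bm{r})$ for $l \ge 2$. Substituting this lower bound for $H(t_l \mid r_l)$ into the Fano bound — and noting that $\log^{-1}(|\mathcal{T}_l|-1) > 0$ so the inequality direction is preserved — yields exactly
\[
P(e_l) \ge \log^{-1}(|\mathcal{T}_l|-1)\left[ \frac{H_{<l}(\bm{t} \mid \bm{r})}{l-1} - H_b(e_l) \right],
\]
which is the claimed statement. The remark that $H_b(e_l)$ is a small constant is just the observation that the binary entropy function is bounded by $1$ (in bits) and is small when the per-step error probability is small; this is not needed for the formal inequality but motivates why the bound is informative.

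The one technical point that needs care is the correct form of Fano's inequality: the standard textbook version is $H(t_l \mid \hat t_l) \le H_b(e_l) + P(e_l)\log(|\mathcal{T}_l|-1)$, with the conditioning on $\hat t_l$ rather than on $r_l$. Since $\hat t_l = f(r_l)$ is a deterministic function of $r_l$, the data-processing inequality gives $H(t_l \mid r_l) \le H(t_l \mid \hat t_l)$, so the bound with conditioning on $r_l$ follows and is in fact the tighter (more useful) direction here — we want an upper bound on $H(t_l\mid r_l)$, and $H(t_l\mid r_l)\le H(t_l\mid \hat t_l)$ goes the wrong way. The resolution is that Fano's inequality can be stated directly with $H(t_l \mid r_l)$ on the left: for any predictor built from $r_l$, $H(t_l\mid r_l)\le H_b(e_l)+P(e_l)\log(|\mathcal{T}_l|-1)$ holds because conditioning on $r_l$ only reduces entropy further relative to conditioning on $\hat t_l$, and the chain $H(t_l\mid r_l)\le H(t_l\mid \hat t_l)\le H_b(e_l)+P(e_l)\log(|\mathcal T_l|-1)$ is the one we want. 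So the main obstacle is simply stating the conditional-entropy version of Fano correctly and confirming the inequality chain closes in the needed direction; once that is pinned down, the rest is a one-line substitution from \cref{lm:information-loss-inequality}.
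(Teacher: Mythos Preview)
Your proposal is correct and follows essentially the same route as the paper: apply Fano's inequality to bound $H(t_l\mid\hat t_l)$, use $H(t_l\mid r_l)\le H(t_l\mid\hat t_l)$ via data processing (since $\hat t_l=f(r_l)$), then invoke \cref{lm:information-loss-inequality} and rearrange. Your last paragraph is needlessly tangled, though --- the inequality $H(t_l\mid r_l)\le H(t_l\mid\hat t_l)$ does \emph{not} ``go the wrong way''; it is precisely the direction needed to chain $H(t_l\mid r_l)\le H(t_l\mid\hat t_l)\le H_b(e_l)+P(e_l)\log(|\mathcal T_l|-1)$, as you yourself conclude a sentence later, so just delete the self-contradictory aside.
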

The proof mainly relies on Fano's inequality~\cite{fano2008fano} and is provided in \cref{app:proof-lower-bound-of-probability-of-reasoning-error}. \cref{thm:lower-bound-of-probability-of-reasoning-error} sets a lower bound on the probability of reasoning errors at every intermediate step $l$. Since $H_{<l}(\bm{t} | \bm{r})$ represents the cumulative information loss up to step $l$, it is assumed to increase at least linearly with $l$. \textbf{Furthermore, when the snowball effect occurs, $H_{<l}(\bm{t} | \bm{r})$ may grow faster than linearly.} As a result, the lower bound on the probability of reasoning errors rises with the reasoning path length $l$, indicating that the chance of errors increases as snowball errors accumulate. 

\subsection{Empirical Verification}
To empirically verify the existence of snowball errors in LLM reasoning, we conducted experiments using the GSM8k dataset~\cite{cobbe2021training}. The evaluation was performed on three state-of-the-art reasoning LLMs: Llama-3.1-8B-Instruct~\yrcite{dubey2024llama}, Qwen2.5-7B-Instruct~\yrcite{qwen2.5}, and Skywork-o1-Open-Llama-3.1-8B~\yrcite{skyworkopeno12024}. 
We estimate the mutual information $I(\bm{t};\bm{r})$ per token and the reward of the responses to illustrate their relationship with the reasoning path length $L$. 
Detailed experimental settings are provided in \cref{app:experiment_setting:snowball}. 

As illustrated in \cref{fig:MI-and-Reward}, the estimated mutual information decreases in a nearly exponential and much faster rate than linear decay as $L$ increases. 
Since our verification computes the average mutual information per token, the actual rate of decline in mutual information for tokens appearing later in the sequence is likely to be more pronounced. 
Furthermore, the reward scores of the responses correspond to the mutual information, and also diminish with increasing response length. These findings confirm the existence of snowball errors by verifying that the information loss can accumulate much faster than linearly in the reasoning processes of LLMs, 
and illustrate their relevance to the quality of responses, aligning with our theoretical analysis. 
For a more robust verification, we also include results on larger LLMs for a more robust verification in \cref{app:snowball-error-on-larger-llms} and an additional analysis by different difficulty levels in \cref{app:snowball-error-by-difficulty-levels}. 
\section{Probability of Correct Reasoning in External Slow-Thinking}
\label{sec:correct_probablity_of_slow_thinking}

Previous analyses demonstrate that the probability of reasoning errors, $P(e_l)$, increases with the number of reasoning steps $l$. In the practice, however, reasoning errors are often reflected in the reward associated with the generated responses. In this section, we extend our theoretical analysis to real-world scenarios and explore the mechanisms underlying the effectiveness of external slow-thinking methods. 
\subsection{What is a Correct Reasoning?} 
We begin our analysis by defining the reasoning process in real-world settings. Given a question $r_0$, a response $\mathcal{R}$ is represented as a sequence of $L$ reasoning steps, i.e., $\mathcal{R} = [r_1, r_2, \cdots, r_L]$, generated autoregressively by the LLM. An oracle $\phi$ is then employed to evaluate the quality of each step $r_l$ produced at layer $l$, denoted as $\phi(r_l)$. In practice, this evaluation is often determined by human feedback or a reward model. 
Furthermore, we assume that for each reasoning step, there exists a corresponding golden step $r_l^*$, representing the most accurate and correct step that the LLM should generate, aligning with ideal human reasoning. 

Based on the above setting, the oracle evaluation can be used to quantify the correctness of a response. Specifically, we define this measure as follows: 
\begin{definition}\label{def:step-correctness}
    ($\tau$-correct step.) A step $r_l$ is considered as $\tau$-correct if the quality difference between the step and the golden step is less than $\tau$, i.e., $|\phi(r_l)-\phi(r_l^*)| \leq \tau$. 
\end{definition}
Similarly, we can define the correctness of an entire reasoning process as follows: 
\begin{definition}\label{def:response-correctness}
    ($\tau$-correct reasoning.) A response $\mathcal{R}$ is considered as $\tau$-correct if all steps in the sequence are $\tau$-correct, i.e., $\forall l, |\phi(r_l)-\phi(r_l^*)| \leq \tau$ (denoted as $\psi(\mathcal{R}) \leq \tau$). 
\end{definition}

\cref{def:step-correctness} and \cref{def:response-correctness} provide formal definitions for measuring the correctness of a reasoning step or an entire reasoning process. Intuitively, for a given task, the probability of achieving a $\tau$-correct step is determined by three primary factors: the capability of the LLM, the correctness threshold $\tau$, and the length of the current reasoning path. 

\subsection{Probability of Correct Reasoning}
The results shown in \cref{fig:MI-and-Reward} indicate that the average MI can decrease exponentially with the reasoning length, which suggests that the average snowball errors increase exponentially with the reasoning length. 
Since the probability cannot exceed $1$, with \cref{thm:lower-bound-of-probability-of-reasoning-error}, we thus hypothesize that the probability of encountering an error in practice follows an exponential decay function: $P(e_l)=1-\lambda e^{-l}$ for the convenience of subsequent analyses. Thus the probability of generating a correct step in layer $l$ can be proposed as: 

\begin{proposition}\label{lm:prob of correct thought}
    (Probability of $\tau$-correct step.) We propose that the probability of generating a $\tau$-correct step is related to the layer index $l$ in the following way:

    $$\operatorname{Pr}\left[|\phi(r_l)-\phi(r_l^*)| \leq \tau\right] = \min(\lambda_\tau e^{-l},1),$$ 

    where $\lambda_\tau$ is a constant relevant to correctness $\tau$, and a premise is that the step $r_{l-1}$ is already $\tau$-correct. 
\end{proposition}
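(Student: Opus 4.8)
The plan is to treat this as what the wording already signals: a modeling \emph{proposition} rather than a theorem, so I would not derive it from scratch but instead show that the asserted functional form is the natural consequence of \cref{thm:lower-bound-of-probability-of-reasoning-error} and the empirical behavior in \cref{fig:MI-and-Reward}, once the information-theoretic notion of a reasoning error is transported to the oracle-based notion of $\tau$-correctness of \cref{def:step-correctness}. I would organize this in three short steps.

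First, I would recall that \cref{thm:lower-bound-of-probability-of-reasoning-error} forces $P(e_l)$ to be non-decreasing in $l$ — it is bounded below by a positive multiple of $H_{<l}(\bm t|\bm r)/(l-1)$, which is at least linear and, under the snowball effect, super-linear in $l$ — while trivially $P(e_l)\le 1$. The cheapest one-parameter family that is monotone in $l$ and saturates at $1$ at the exponential rate suggested by the near-exponential decay of the estimated mutual information is $P(e_l)=1-\lambda e^{-l}$ with $\lambda\in(0,1)$; this is exactly the hypothesis already introduced in the paragraph preceding the proposition, and I would simply adopt it.

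Second, I would link $e_l$ to $\tau$-correctness. Conditioning on $r_{l-1}$ being $\tau$-correct removes any error inherited from earlier steps, so that $r_l$ fails to be $\tau$-correct only when the step-$l$ information loss $H(t_l|r_l)$ is large enough to prevent any reconstruction $f$ from recovering $t_l$ to within the oracle's resolution $\tau$, i.e. exactly when the event $e_l$ occurs at tolerance $\tau$. Hence $\Pr[\,|\phi(r_l)-\phi(r_l^*)|\le\tau \mid r_{l-1}\text{ is }\tau\text{-correct}\,]=1-P(e_l)=\lambda e^{-l}$, where a more permissive $\tau$ absorbs more information loss and therefore only rescales the prefactor; writing this $\tau$-dependent constant as $\lambda_\tau$ (increasing in $\tau$) gives $\lambda_\tau e^{-l}$. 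This conditional form is also the one needed later to multiply per-step probabilities into the probability of a $\tau$-correct reasoning. Finally, since the left-hand side is a probability it cannot exceed $1$, and for small $l$ the expression $\lambda_\tau e^{-l}$ may already be $\ge 1$, so I would replace it by $\min(\lambda_\tau e^{-l},1)$, which is the claimed identity.

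The main obstacle is the second step: the bridge between the mutual-information/Fano description of reasoning errors used in \cref{sec:snowball} and \cref{sec:snowball_to_probability} and the oracle/reward description of $\tau$-correctness used here. These are two different formalisms — one intrinsic and information-theoretic, the other defined through an external evaluator $\phi$ — so the identification ``$r_l$ not $\tau$-correct $\Leftrightarrow$ $e_l$ at resolution $\tau$'' is a modeling assumption rather than a deduction; making it fully rigorous would require positing and exploiting a quantitative monotone link between $H(t_l|r_l)$ and $|\phi(r_l)-\phi(r_l^*)|$, and carefully tracking how conditioning on a $\tau$-correct predecessor $r_{l-1}$ interacts with the conditional entropies that appear in \cref{lm:information-loss-inequality}.
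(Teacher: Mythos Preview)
Your reading is correct: the paper does not prove \cref{lm:prob of correct thought} at all---it is introduced as a modeling hypothesis (``we thus hypothesize that the probability of encountering an error in practice follows an exponential decay function $P(e_l)=1-\lambda e^{-l}$ for the convenience of subsequent analyses''), motivated by \cref{thm:lower-bound-of-probability-of-reasoning-error} and the empirical exponential MI decay in \cref{fig:MI-and-Reward}, and immediately backstopped by the relaxed monotone-decay version in \cref{app:relaxed-theoretical-results}. Your proposal takes exactly this stance and supplies the same motivation, and you also correctly flag that the bridge from the information-theoretic $e_l$ to the oracle-based $\tau$-correctness is a modeling identification the paper never makes rigorous.
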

This proposition mainly aims to obtain more intuitive conclusions in subsequent analysis and is based on the intuition of LLM's multi-step reasoning errors. In particular, we point out in \cref{app:relaxed-theoretical-results} that under a more relaxed and generalized proposition, our subsequent theoretical results can still be guaranteed. 

With \cref{prop:reasoning_error}, we can derive the probability of generating a $\tau$-correct response $\mathcal{R}$ as follows: 

\begin{lemma}\label{lm:prob of correct response}
    (Probability of $\tau$-correct reasoning.) The probability of generating a $\tau$-correct response $\mathcal{R}$ is:
    \[
    \begin{aligned}
        \operatorname{Pr}\left[\psi(\mathcal{R}) \leq \tau\right] &= \prod_{l=1}^{L} \operatorname{Pr}\left[|\phi(r_l)-\phi(r_l^*)| \leq \tau\right] \\ &\leq \lambda_\tau^L e^{-\frac{L(L+1)}{2}}.
    \end{aligned}
    \]
\end{lemma}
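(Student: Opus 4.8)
The plan is to establish the claimed bound on $\operatorname{Pr}[\psi(\mathcal{R}) \leq \tau]$ by combining the definition of $\tau$-correct reasoning (\cref{def:response-correctness}) with the step-wise probability from \cref{lm:prob of correct thought}. First I would justify the factorization: by \cref{def:response-correctness}, the event $\{\psi(\mathcal{R}) \leq \tau\}$ is the intersection of the per-step events $\{|\phi(r_l)-\phi(r_l^*)| \leq \tau\}$ over $l = 1, \dots, L$. Writing the joint probability as a product of conditionals, each conditional factor is exactly the quantity controlled by \cref{lm:prob of correct thought}, since that proposition gives the probability of a $\tau$-correct step at layer $l$ under the premise that step $r_{l-1}$ is already $\tau$-correct — which is precisely the conditioning event. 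This yields $\operatorname{Pr}[\psi(\mathcal{R}) \leq \tau] = \prod_{l=1}^{L} \operatorname{Pr}[|\phi(r_l)-\phi(r_l^*)| \leq \tau]$.

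Next I would bound each factor: by \cref{lm:prob of correct thought}, $\operatorname{Pr}[|\phi(r_l)-\phi(r_l^*)| \leq \tau] = \min(\lambda_\tau e^{-l}, 1) \leq \lambda_\tau e^{-l}$. Multiplying these bounds over $l = 1, \dots, L$ gives $\prod_{l=1}^{L} \lambda_\tau e^{-l} = \lambda_\tau^L \exp\!\big(-\sum_{l=1}^{L} l\big) = \lambda_\tau^L e^{-L(L+1)/2}$, using the arithmetic series identity $\sum_{l=1}^{L} l = L(L+1)/2$. Chaining the equality and the inequality delivers the stated result.

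The main subtlety — and the step I would treat most carefully — is the passage from the joint probability to the product in \cref{lm:prob of correct response}, because \cref{lm:prob of correct thought} is stated as a \emph{conditional} probability (it presupposes $r_{l-1}$ is $\tau$-correct), whereas the lemma writes an \emph{unconditional} product. The clean way to handle this is the chain rule: $\operatorname{Pr}[\bigcap_{l=1}^L A_l] = \prod_{l=1}^L \operatorname{Pr}[A_l \mid A_1, \dots, A_{l-1}]$ where $A_l = \{|\phi(r_l)-\phi(r_l^*)| \leq \tau\}$, and then invoke the modeling assumption implicit in \cref{lm:prob of correct thought} that the relevant conditioning is captured by the immediately preceding step $A_{l-1}$ (a Markov-type assumption on error propagation), so that $\operatorname{Pr}[A_l \mid A_1, \dots, A_{l-1}] = \operatorname{Pr}[A_l \mid A_{l-1}] = \min(\lambda_\tau e^{-l}, 1)$. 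Once this is granted, the remaining computation is just the geometric-type product and the triangular-number sum, both routine. I would also note that since the $\min(\cdot, 1)$ can only make factors smaller than $\lambda_\tau e^{-l}$, the inequality direction is preserved even in the regime $l$ small where $\lambda_\tau e^{-l}$ might exceed $1$.
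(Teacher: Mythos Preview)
Your proposal is correct and follows the same line as the paper, which treats the lemma as an immediate consequence of \cref{def:response-correctness} and \cref{lm:prob of correct thought}: factor the joint event into per-step terms, bound each by $\lambda_\tau e^{-l}$, and sum the exponents via $\sum_{l=1}^L l = L(L+1)/2$. If anything, you are more careful than the paper in making explicit the chain-rule/Markov justification for writing the joint probability as a product of the conditional step probabilities from \cref{lm:prob of correct thought}; the paper simply asserts the product form without comment.
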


\cref{lm:prob of correct response} shows that the probability of generating a correct response decreases exponentially with the reasoning leangth $L$. 
This result aligns with the practical experience that the LLM is more likely to make mistakes in more complex reasoning tasks with more reasoning layers. 

\subsection{External Slow-Thinking Mechanisms}
In general, external slow-thinking methods aim to enhance the correctness of generated responses by incorporating additional reasoning steps. Due to the inherent stochastic nature of the LLM’s sampling mechanism, the probability of generating a correct response cannot be guaranteed. However, by introducing supplementary reasoning steps and employing multiple re-sampling strategies, the likelihood of producing a correct response can be effectively increased. 

Although different methods employ varying strategies to explore the reasoning space, they share two common characteristics: 
(1) \textit{Width-Expansion.} For a reasoning sequence of length $L$, most external slow-thinking methods aim to expand the width of the reasoning space. This expansion is achieved either through simple re-generation techniques (e.g., BoN, CoT-SC) or via more sophisticated approaches like tree search (e.g., ToT, MCTS). 
(2) \textit{Generation \& Selection.} Despite the challenge of generating a good reasoning step, the expansion of the reasoning space introduces the challenge of selecting the most promising reasoning path from a pool of candidates. 
In summary, let $\operatorname{Pr}(\tau_\text{generate})$ denote the probability of generating a $\tau$-correct reasoning and $\operatorname{Pr}(\tau_\text{select})$ denote the probability of selecting a $\tau$-correct reasoning. The overall probability of obtaining a $\tau$-correct response can then be expressed as: 
\[
    \operatorname{Pr}\left[\psi(\mathcal{R}) \leq \tau\right] = \operatorname{Pr}\left(\tau_\text{gennerate}\right) \times \operatorname{Pr}\left(\tau_\text{select}\right). 
\]

While the width-expansion strategy can effectively increase $\operatorname{Pr}\left(\tau_\text{generate}\right)$, the additional reasoning steps introduced also heighten the complexity of selecting the most promising reasoning path, thereby reducing $\operatorname{Pr}\left(\tau_\text{select}\right)$. To illustrate this trade-off, we use beam search as a baseline width-expansion strategy. Beam search is a widely adopted method in tree search algorithms, where the search tree’s width is expanded by generating $k$ child nodes at each layer while retaining only the top-$b$ most promising candidates. The result of this analysis is formalized in the following lemma. 
\begin{lemma} \label{lm:prob of correct reasoning in width-expanding methods}
    (Probability of $\tau$-correct reasoning in width-expanding methods.) For a beam-search-like strategy which samples $k$ steps and keeps $b$ steps as candidates at each expansion, the probability of obtaining a $\tau$-correct response is upper bounded by: 
    \[
        \operatorname{Pr}\left[\psi(\mathcal{R}) \leq \tau\right] \leq \prod_{l=1}^{L}\epsilon_b[1-(1-\lambda_\tau e^{-l})^k],
    \]
    where $\epsilon_b$ is the probability of selecting a $\tau$-correct step from $b$ candidates, which is primarily determined by the reliability of the value function employed in the selection. 
\end{lemma}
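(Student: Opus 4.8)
The plan is to mirror the layer-by-layer decomposition used for \cref{lm:prob of correct response}, but with each per-layer factor now reflecting both the width-expansion (sampling $k$ children) and the selection bottleneck (retaining $b$ of them and picking one). By \cref{def:response-correctness}, the event $\psi(\mathcal{R})\le\tau$ requires that, along the path carried forward by the search, \emph{every} step $l=1,\dots,L$ be $\tau$-correct given that step $l-1$ already is. I would therefore write $\operatorname{Pr}[\psi(\mathcal{R})\le\tau]=\prod_{l=1}^{L}q_l$, where $q_l$ is the conditional probability that layer $l$ contributes and keeps a $\tau$-correct step given a $\tau$-correct prefix, and then bound each $q_l$ from above.

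First I would analyze a single layer. Conditioned on $r_{l-1}$ being $\tau$-correct, \cref{lm:prob of correct thought} says a freshly sampled step at layer $l$ is $\tau$-correct with probability $\min(\lambda_\tau e^{-l},1)$. Treating the $k$ sampled children as conditionally i.i.d.\ given the prefix — the same independence convention underlying \cref{lm:prob of correct response} — the probability that \emph{none} is $\tau$-correct is $(1-\min(\lambda_\tau e^{-l},1))^{k}$, so the candidate pool contains at least one $\tau$-correct step with probability $1-(1-\min(\lambda_\tau e^{-l},1))^{k}$. Having a good step in the pool is necessary but not sufficient: it must also survive the prune-to-$b$ step and then be the one selected, which happens with probability at most $\epsilon_b$ once the pruning is folded pessimistically into the value-function's selection event. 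Hence $q_l\le \epsilon_b\bigl[1-(1-\min(\lambda_\tau e^{-l},1))^{k}\bigr]$, and using $\min(\lambda_\tau e^{-l},1)\le \lambda_\tau e^{-l}$ together with monotonicity of $x\mapsto 1-(1-x)^{k}$ on $[0,1]$ (the regime that applies once $\lambda_\tau e^{-l}\le1$), this gives $q_l\le \epsilon_b[1-(1-\lambda_\tau e^{-l})^{k}]$.

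Multiplying the per-layer bounds over $l=1,\dots,L$ by the chain rule then yields $\operatorname{Pr}[\psi(\mathcal{R})\le\tau]\le\prod_{l=1}^{L}\epsilon_b[1-(1-\lambda_\tau e^{-l})^{k}]$, which is the claimed bound. The inequality is in general strict because $q_l$ was controlled only through a \emph{necessary} condition (a $\tau$-correct child exists, then is retained, then is selected) and because the presence of several $\tau$-correct children among the $k$ is not credited.

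The step I expect to be the main obstacle is making the selection factor rigorous. As stated, $\epsilon_b$ is the probability of picking a $\tau$-correct step out of $b$ candidates, but the clean product $\epsilon_b\cdot[1-(1-\lambda_\tau e^{-l})^{k}]$ tacitly assumes that (i) the $k\to b$ pruning does not discard the relevant good child in a way correlated with the generation randomness, and (ii) there is enough conditional independence between the generation event and the value-function's choice to factor them. I would resolve this either by \emph{defining} $\epsilon_b$ as the conditional probability that a $\tau$-correct step is carried forward given that at least one was sampled (thereby absorbing pruning and ties), or by explicitly positing that the value function acts independently of the generator given the candidate set — the same kind of simplifying assumption already used for \cref{lm:prob of correct thought} and \cref{lm:prob of correct response}. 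A minor additional point is the small-$l$ terms with $\lambda_\tau e^{-l}>1$, for which I would simply keep the $\min$ in place (equivalently, restrict the clean bound to the tail layers), since the finitely many earlier factors are conditional probabilities $\le1$ that can only decrease the left-hand side.
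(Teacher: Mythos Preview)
Your proposal is correct and follows essentially the same route as the paper: decompose layer by layer, bound the per-layer probability of having at least one $\tau$-correct child among $k$ i.i.d.\ samples by $1-(1-\lambda_\tau e^{-l})^{k}$ via $p\le\lambda_\tau e^{-l}$ and monotonicity, attach the selection factor $\epsilon_b$, and multiply over $l$. Your treatment is in fact more careful than the paper's---the paper does not separately discuss the $\min$, the monotonicity step, or the independence/definition issues around $\epsilon_b$ that you flag.
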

The proof is provided in \cref{app:proof-prob-of-correct-reasoning-in-width-expanding-methods}.
Since the typical reasoning setting holds that $k \geq 1$, and $0 \leq \lambda_\tau e^{-l} \leq 1$, the probability of generating a correct response can be further simplified as follows: 
\begin{theorem}\label{thm:upper bound of probability of correct reasoning in width-expanding methods}
    (Upper bound of the probability of $\tau$-correct reasoning in width-expanding methods.) The probability of obtaining a $\tau$-correct response is upper bounded by: 
    \[
        \operatorname{Pr}\left[\psi(\mathcal{R}) \leq \tau\right] \leq {\epsilon_b}^L k^L \lambda_\tau^L e^{-\frac{L(L+1)}{2}}. 
    \]
\end{theorem}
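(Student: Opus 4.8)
The plan is to start directly from the product bound in \cref{lm:prob of correct reasoning in width-expanding methods}, namely
\[
    \operatorname{Pr}\left[\psi(\mathcal{R}) \leq \tau\right] \leq \prod_{l=1}^{L}\epsilon_b\bigl[1-(1-\lambda_\tau e^{-l})^k\bigr],
\]
and to bound each factor from above by a clean exponential term. The key elementary inequality is Bernoulli's inequality: for any $x \in [0,1]$ and integer $k \geq 1$ one has $(1-x)^k \geq 1 - kx$, hence $1-(1-x)^k \leq kx$. Since the reasoning setting assumes $k \geq 1$ and $0 \leq \lambda_\tau e^{-l} \leq 1$ (which is exactly the hypothesis under which \cref{lm:prob of correct thought} is stated, so that $\min(\lambda_\tau e^{-l},1)=\lambda_\tau e^{-l}$ in the relevant regime), applying this with $x = \lambda_\tau e^{-l}$ yields the per-layer estimate $1-(1-\lambda_\tau e^{-l})^k \leq k\lambda_\tau e^{-l}$.

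Next I would substitute this bound into the product. Because every factor $\epsilon_b\bigl[1-(1-\lambda_\tau e^{-l})^k\bigr]$ is nonnegative (both $\epsilon_b \in [0,1]$ as a probability and $1-(1-\lambda_\tau e^{-l})^k \geq 0$ since $(1-\lambda_\tau e^{-l})^k \leq 1$), replacing each factor by the larger quantity $\epsilon_b\, k\, \lambda_\tau e^{-l}$ preserves the inequality when the factors are multiplied together. This gives
\[
    \operatorname{Pr}\left[\psi(\mathcal{R}) \leq \tau\right] \leq \prod_{l=1}^{L} \epsilon_b\, k\, \lambda_\tau\, e^{-l} = \epsilon_b^{\,L}\, k^{L}\, \lambda_\tau^{L}\, \exp\!\Bigl(-\sum_{l=1}^{L} l\Bigr).
\]

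Finally I would evaluate the exponent using the standard identity $\sum_{l=1}^{L} l = \frac{L(L+1)}{2}$, which immediately produces the claimed bound ${\epsilon_b}^L k^L \lambda_\tau^L e^{-\frac{L(L+1)}{2}}$. There is no real obstacle here: the proof is a short chain of a term-wise Bernoulli estimate, monotonicity of products of nonnegative numbers, and an arithmetic-series evaluation. The only point that warrants a line of care is the explicit verification that the hypotheses $k \geq 1$ and $\lambda_\tau e^{-l} \in [0,1]$ license Bernoulli's inequality and guarantee nonnegativity of all factors, so that the term-wise bound can be propagated through the product without flipping the inequality direction.
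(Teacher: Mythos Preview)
Your proposal is correct and follows essentially the same approach as the paper: both start from \cref{lm:prob of correct reasoning in width-expanding methods}, apply the Bernoulli-type inequality $(1-\lambda_\tau e^{-l})^k \geq 1 - k\lambda_\tau e^{-l}$ under the stated hypotheses $k \geq 1$ and $0 \leq \lambda_\tau e^{-l} \leq 1$, and then collapse the resulting product using $\sum_{l=1}^L l = \tfrac{L(L+1)}{2}$. Your write-up is slightly more careful in justifying that all factors are nonnegative so the term-wise bound propagates through the product, but the argument is otherwise identical.
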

The proof is provided in \cref{app:proof-upper-bound-of-probability-of-correct-reasoning-in-width-expanding-methods}, and we also provide an empirical analysis about the influence of $k$ in \cref{app:influence-of-k}. \cref{lm:prob of correct reasoning in width-expanding methods} and \cref{thm:upper bound of probability of correct reasoning in width-expanding methods} show that the probability of generating a correct response in width-expanding methods depends on three key factors: the number of generations $k$ per layer, the number of candidates $b$ selected per layer, and the correctness threshold $\tau$. Specifically, components related to $\epsilon_b$ represent \(\operatorname{Pr}\left(\tau_\text{select}\right)\), which are influenced by the reliability of the value function used in selection. Meanwhile, \(\operatorname{Pr}\left(\tau_\text{generate}\right)\) is mainly determined by the expansion width $k$ and reasoning path length $L$.
Compared with \cref{lm:prob of correct response}, \cref{thm:upper bound of probability of correct reasoning in width-expanding methods} further illustrates the mechanism of external slow-thinking methods. By scaling $k$, these methods can improve the probability of correct reasoning at the cost of additional reasoning steps. However, the effectiveness heavily depends on the reliability of the value function, reflected in $\epsilon_b^L$. For slow-thinking methods to be effective, the value function must satisfy $\epsilon_b > \frac{1}{k}$ to ensure an improvement in the probability upper bounds. 

Based on the above analysis, we can infer the underlying mechanisms of external slow-thinking methods. 
\textbf{By expanding the reasoning space, these methods effectively increase the probability of generating a correct response, thereby mitigating the impact of snowball errors.} 
However, selecting the most promising reasoning path poses a significant challenge, \textbf{as the effectiveness of this selection heavily depends on the reliability of the employed value function, which can substantially influence the overall performance of the method.} 
We also provide analyses of the mechanism of external slow-thinking methods from other perspective in \cref{app:more_mechanism}. 
\section{Comparison between External Slow-Thinking Frameworks}
\label{sec:comparison_between_slow-thinking_framework}
\begin{figure*}[h]
    \begin{center}
    \centerline{\includegraphics[width=0.8\linewidth]{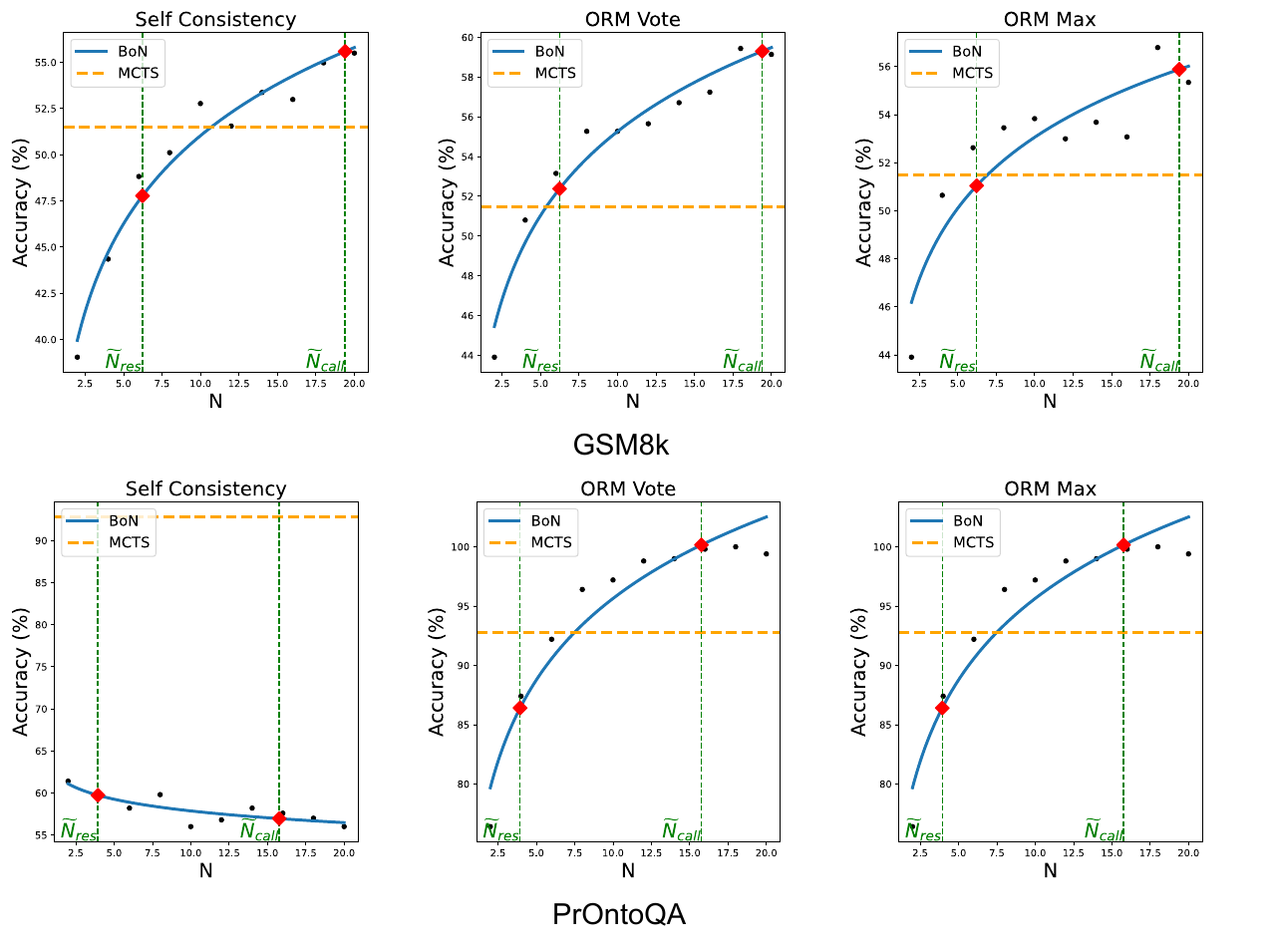}}
    \caption{Comparison of the accuracy of various reasoning strategies on GSM8k and PrOntoQA. ``\dashedorangeline'' indicates the accuracy of the baseline MCTS, while ``\blueline'' represents the accuracy of BoN under different $N$ settings. The x-axis corresponds to the varying values of $N$. Additionally, the estimated values $\widetilde{N}_{res}$ and $\widetilde{N}_{call}$ are marked by vertical green dashed lines.}    \label{fig:mcts-vs-bon}
    \end{center}
\end{figure*}

Previous analyses have demonstrated that external slow-thinking methods effectively increase the search width, thereby enhancing the probability of correct reasoning. In this section, we use the simplest strategy, Best-of-N (BoN), and the widely adopted sophisticated strategy, Monte Carlo Tree Search (MCTS), as examples to compare their effectiveness and examine the impact of specific framework designs. 

\subsection{Correct Reasoning Probability}
We begin by determining the probability of correct reasoning for BoN and MCTS using the results from \cref{thm:upper bound of probability of correct reasoning in width-expanding methods}. 
With the assumption that the total number of reasoning steps is $L$, 
BoN can be characterized as generating $N$ reasoning steps in the first layer, followed by the generation of a single step in subsequent layers, and finally applying a reward model (RM) to select one path from the $N$ candidates in the $L$-th layer. Accordingly, the probability of achieving $\tau$-correct reasoning with BoN can be bounded as follows: 
\begin{lemma}\label{lm:upper bound of the probability of correct reasoning with BoN}
    (Upper bound of the probability of $\tau$-correct reasoning with BoN.) The probability of obtaining a $\tau$-correct response in BoN satisfies:
    \[
        \operatorname{Pr}\left[\psi(\mathcal{R}) \leq \tau\right] \leq \epsilon_N N^L \lambda_\tau^L e^{-\frac{L(L+1)}{2}}.
    \]
\end{lemma}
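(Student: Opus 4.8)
The plan is to read off the bound from the general width-expanding analysis by observing that Best-of-$N$ is the special case of the beam-search-like template of \cref{lm:prob of correct reasoning in width-expanding methods} in which the search tree branches with width $k=N$ at the first layer, with width $k=1$ at every subsequent layer, all $b=N$ root-to-leaf paths are retained, and the value function (here, the reward model) is invoked \emph{exactly once} — at the final layer $L$ — to pick a single path out of the $N$ candidates. The key structural point is therefore that the selection-reliability factor $\epsilon_N$ enters the bound only to the first power, rather than once per layer as in the general beam-search bound, because BoN performs only one selection.

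Concretely, I would first bound the probability that one fixed BoN path of length $L$ is $\tau$-correct. Applying \cref{lm:prob of correct response} to a single autoregressively generated path $\mathcal{R}^{(i)}$,
\[
\operatorname{Pr}\left[\psi(\mathcal{R}^{(i)}) \leq \tau\right] = \prod_{l=1}^{L}\operatorname{Pr}\left[|\phi(r_l)-\phi(r_l^*)|\le\tau\right] \le \prod_{l=1}^{L}\lambda_\tau e^{-l} = \lambda_\tau^{L}\, e^{-\frac{L(L+1)}{2}},
\]
using $\sum_{l=1}^{L} l = L(L+1)/2$. Next, since the path returned by BoN can be $\tau$-correct only if at least one of the $N$ independently sampled paths is $\tau$-correct, a union bound over the $N$ paths gives $\operatorname{Pr}\left[\exists\, i:\ \psi(\mathcal{R}^{(i)})\le\tau\right] \le N\,\lambda_\tau^{L} e^{-L(L+1)/2}$. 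Finally, conditioning on whether a $\tau$-correct candidate is present and using that the reward model returns a $\tau$-correct path with probability $\epsilon_N$ when one exists (and probability $0$ otherwise),
\[
\operatorname{Pr}\left[\psi(\mathcal{R})\le\tau\right] = \epsilon_N \operatorname{Pr}\left[\exists\, i:\ \psi(\mathcal{R}^{(i)})\le\tau\right] \le \epsilon_N\, N\,\lambda_\tau^{L} e^{-\frac{L(L+1)}{2}} \le \epsilon_N\, N^{L}\,\lambda_\tau^{L} e^{-\frac{L(L+1)}{2}},
\]
where the last inequality merely relaxes $N\le N^{L}$ (valid for $N\ge1$, $L\ge1$) so that the statement takes the form of \cref{thm:upper bound of probability of correct reasoning in width-expanding methods} with $k=N$, $b=N$.

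An alternative route that stays entirely inside the width-expanding framework is to substitute $k_1=N$ and $k_l=1$ for $l\ge2$ into the per-layer product of \cref{lm:prob of correct reasoning in width-expanding methods}, bound each factor $1-(1-\lambda_\tau e^{-l})^{k_l}$ by $k_l\,\lambda_\tau e^{-l}\le N\,\lambda_\tau e^{-l}$ via Bernoulli's inequality, and multiply; this again yields $\epsilon_N N^{L}\lambda_\tau^{L}e^{-L(L+1)/2}$. I expect the only genuinely delicate point to be the bookkeeping of the single reward-model call — namely, carefully arguing that $\epsilon_N$ appears to the first power and not as $\epsilon_N^{L}$ as it would if selection were applied at every layer; the remaining estimates are routine consequences of \cref{lm:prob of correct response}, the union bound, and the identity $\sum_{l=1}^{L} l = L(L+1)/2$.
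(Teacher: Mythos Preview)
Your proposal is correct. The paper gives no explicit proof for this lemma; it simply states the bound after characterizing BoN as the width-expanding template with $k=N$ at layer $1$, $k=1$ thereafter, all $N$ candidates retained, and a single reward-model selection at depth $L$, so the implicit argument is exactly your second route (substitute into \cref{lm:prob of correct reasoning in width-expanding methods} and bound each factor as in \cref{thm:upper bound of probability of correct reasoning in width-expanding methods}, with a lone $\epsilon_N$ replacing $\epsilon_b^{L}$ because only one selection occurs). Your first route---\cref{lm:prob of correct response} for a single path, a union bound over the $N$ paths, then the $\epsilon_N$ factor---is an equally valid and slightly more direct alternative that even yields the tighter intermediate bound $\epsilon_N\,N\,\lambda_\tau^{L} e^{-L(L+1)/2}$ before the harmless relaxation $N\le N^{L}$; the paper does not take this shortcut.
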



In contrast, MCTS employs a more intricate structure
, making it difficult to derive a closed-form expression for the probability of correct reasoning. To simplify the analysis, we consider the “best-case” and “worst-case” scenarios for MCTS. Here, the “best” and “worst” cases are defined based on the difficulty for BoN to achieve a comparable probability of correct reasoning, rather than the actual performance of MCTS. 

Using the RAP-like classic MCTS strategy~\cite{hao-etal-2023-reasoning} as an example, where MCTS terminates once a reasoning path of length $L$ is derived, \textbf{the best-case scenario occurs when MCTS expands only the deepest leaf node at each step}. Assuming $b$ child nodes are expanded per iteration, MCTS in this scenario reduces to a beam search strategy with both a sample size and beam width of $b$. The probability of obtaining a $\tau$-correct reasoning in the best-case scenario can then be expressed as: 
\begin{lemma}\label{lm:upper bound of the probability of correct reasoning with MCTS in best case}
    (Upper bound of the probability of $\tau$-correct reasoning with MCTS in best case.) The probability of obtaining a $\tau$-correct response in MCTS in the best case satisfies: 
    \[
        \operatorname{Pr}\left[\psi(\mathcal{R}) \leq \tau\right] \leq \epsilon_b^L b^L \lambda_\tau^L e^{-\frac{L(L+1)}{2}}.
    \]
\end{lemma}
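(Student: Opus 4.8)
The plan is to reduce best-case MCTS to the beam-search template already analyzed in \cref{lm:prob of correct reasoning in width-expanding methods} and \cref{thm:upper bound of probability of correct reasoning in width-expanding methods}, and then specialize the parameters. Concretely, I would first make the ``best-case'' reduction precise: in the RAP-like MCTS we consider, each iteration selects a node, expands $b$ children, evaluates them, and backpropagates; the case that is hardest for BoN to match is the one in which the selection phase always dives to the current deepest leaf, so that every iteration appends exactly one new layer to a single growing path and the only branching happens among the $b$ freshly expanded children of that leaf, from which the value function keeps the most promising one to continue. This is exactly a beam-search-like strategy in the sense of \cref{lm:prob of correct reasoning in width-expanding methods} with sampling width $k=b$ and retained-candidate count $b$; here the retained count coincides with the expansion count because no sibling subtrees survive to compete.

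Second, I would invoke \cref{thm:upper bound of probability of correct reasoning in width-expanding methods} verbatim with the substitution $k \leftarrow b$. That theorem gives $\operatorname{Pr}[\psi(\mathcal{R}) \leq \tau] \leq \epsilon_b^L k^L \lambda_\tau^L e^{-L(L+1)/2}$ for any $k \geq 1$, and since $b \geq 1$ in any nontrivial expansion, plugging in $k=b$ yields the claimed bound $\epsilon_b^L b^L \lambda_\tau^L e^{-L(L+1)/2}$. The factor $\epsilon_b$ is unchanged because in both settings the per-layer selection is performed by the same value function acting on $b$ candidates, so the per-layer probability is still $\epsilon_b[1-(1-\lambda_\tau e^{-l})^k]$ with $k=b$; the intermediate algebra (the inequality $1-(1-\lambda_\tau e^{-l})^b \leq b\,\lambda_\tau e^{-l}$ and the exponent $\sum_{l=1}^{L} l = L(L+1)/2$) is simply inherited from the proof of \cref{thm:upper bound of probability of correct reasoning in width-expanding methods}.

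The main obstacle is the first step: arguing convincingly that the deepest-leaf-only expansion schedule is genuinely equivalent to beam search with $k=b$ and beam width $b$, rather than to a scheme with a different effective width. One must check that over $L$ iterations the path indeed reaches depth $L$ (one layer added per iteration), that at each layer exactly $b$ candidate steps are generated and exactly one retained candidate feeds the next layer so that the per-layer factors line up with \cref{lm:prob of correct reasoning in width-expanding methods}, and that this schedule is ``best'' in the stated sense, i.e., that it minimizes the effective search advantage over BoN — the latter being a definitional point that should be stated explicitly rather than derived. Once the reduction is in place, the bound follows immediately by specialization, with no further computation required.
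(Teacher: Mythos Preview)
Your proposal is correct and matches the paper's own argument essentially verbatim: the paper states that in the best case MCTS ``reduces to a beam search strategy with both a sample size and beam width of $b$,'' and then notes that the proof is analogous to \cref{thm:upper bound of probability of correct reasoning in width-expanding methods} with the substitution $k=b$. Your reduction and specialization are exactly this, and your caveats about the reduction being definitional rather than derived are accurate---the paper treats the equivalence as a setup assumption, not something to be proved.
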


\textbf{In the worst-case scenario, MCTS must expand all possible child nodes at each step, ultimately constructing a complete $b$-ary search tree} that terminates at the $L$-th layer. Consequently, the probability of obtaining a $\tau$-correct response with MCTS in the worst-case scenario can be bounded as: 
\begin{lemma}\label{lm:upper bound of the probability of correct reasoning with MCTS in worst case}
    (Upper bound of the probability of $\tau$-correct reasoning with MCTS in worst case.) The probability of generating a $\tau$-correct response in MCTS in the worst case satisfies: 
    \[
        \operatorname{Pr}\left[\psi(\mathcal{R}) \leq \tau\right] \leq  \lambda_\tau^L \left(\frac{e}{b}\right)^{-\frac{L(L+1)}{2}}\prod_{l=1}^L \epsilon_{b^l}. 
    \]
\end{lemma}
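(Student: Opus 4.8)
The plan is to reduce the worst-case MCTS to the layer-dependent version of the beam-search analysis already carried out in \cref{lm:prob of correct reasoning in width-expanding methods}, and then collapse the resulting product exactly as in the passage from \cref{lm:prob of correct reasoning in width-expanding methods} to \cref{thm:upper bound of probability of correct reasoning in width-expanding methods}. First I would make the combinatorial structure explicit: in the worst case MCTS expands every child of every node, so after $l$ layers the search tree is a complete $b$-ary tree and therefore contains exactly $b^l$ nodes at layer $l$, each being a freshly generated candidate step whose parent lies at layer $l-1$. Hence, relative to the beam-search template of \cref{lm:prob of correct reasoning in width-expanding methods}, the worst-case MCTS is precisely a search that samples $k_l = b^l$ steps at layer $l$ and retains all $b_l = b^l$ of them as candidates, so the analogue of \cref{lm:upper bound of the probability of correct reasoning with MCTS in best case} (which was the special case $k_l=b_l=b$) now has layer-dependent width $b^l$.

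Next I would rerun the argument behind \cref{lm:prob of correct reasoning in width-expanding methods} with these layer-dependent parameters. Conditioning on the path being $\tau$-correct through layer $l-1$, \cref{lm:prob of correct thought} assigns each of the $b^l$ steps generated at layer $l$ probability $\lambda_\tau e^{-l}$ of being $\tau$-correct, so the probability that at least one of them is $\tau$-correct is at most $1-(1-\lambda_\tau e^{-l})^{b^l}$; multiplying by the per-layer selection probability $\epsilon_{b^l}$ of picking a $\tau$-correct step out of the $b^l$ candidates and telescoping over $l=1,\dots,L$ yields
\[
\operatorname{Pr}\left[\psi(\mathcal{R})\leq\tau\right]\;\leq\;\prod_{l=1}^{L}\epsilon_{b^l}\left[1-(1-\lambda_\tau e^{-l})^{b^l}\right].
\]
Since $b^l\geq 1$ and $0\leq\lambda_\tau e^{-l}\leq 1$, the elementary inequality $1-(1-x)^n\leq nx$ gives $1-(1-\lambda_\tau e^{-l})^{b^l}\leq b^l\lambda_\tau e^{-l}$, so the bound becomes $\prod_{l=1}^{L}\epsilon_{b^l}\,b^l\,\lambda_\tau\,e^{-l}$. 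Finally I would collect the three telescoping products using $\sum_{l=1}^{L}l=\frac{L(L+1)}{2}$, namely $\prod_{l=1}^L b^l=b^{L(L+1)/2}$, $\prod_{l=1}^L e^{-l}=e^{-L(L+1)/2}$, and $\prod_{l=1}^L\lambda_\tau=\lambda_\tau^L$, and rearrange into $\lambda_\tau^L\bigl(e/b\bigr)^{-L(L+1)/2}\prod_{l=1}^{L}\epsilon_{b^l}$, which is the claimed bound.

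The main obstacle is the reduction step, i.e. arguing that the worst-case MCTS genuinely behaves like a layer-dependent beam search with both sample count and surviving-candidate count equal to $b^l$ at layer $l$: one must be careful that selection is counted once per layer against a pool of size $b^l$ (hence the factor $\epsilon_{b^l}$, not $\epsilon_b$), that the ``parent must already be $\tau$-correct'' conditioning in \cref{lm:prob of correct thought} still telescopes cleanly into a single product over layers, and that counting every node in the complete $b$-ary tree at layer $l$ only inflates the probability, so the inequality direction is preserved. Once this identification with \cref{lm:prob of correct reasoning in width-expanding methods} is in place, the remaining steps are the same routine manipulations used to derive \cref{thm:upper bound of probability of correct reasoning in width-expanding methods}, merely specialised from a constant $k$ to $k=b^l$.
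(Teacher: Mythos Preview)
Your proposal is correct and takes essentially the same approach as the paper: the paper simply remarks that the proof is analogous to that of \cref{thm:upper bound of probability of correct reasoning in width-expanding methods} with $k=b^l$ in place of a constant $k$, and your write-up spells out exactly that argument, including the Bernoulli inequality step and the collection of the telescoping products. Your additional care about why the selection factor is $\epsilon_{b^l}$ (pool of size $b^l$ at layer $l$) and why the $\tau$-correct conditioning telescopes is more explicit than the paper's own treatment, but the route is the same.
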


The proof is analogous to that of \cref{thm:upper bound of probability of correct reasoning in width-expanding methods}, with the primary difference being that, in \cref{lm:upper bound of the probability of correct reasoning with MCTS in best case}, we have $k = b$, whereas in \cref{lm:upper bound of the probability of correct reasoning with MCTS in worst case}, $k = b^l$. 

\cref{lm:upper bound of the probability of correct reasoning with MCTS in best case} and \cref{lm:upper bound of the probability of correct reasoning with MCTS in worst case} establish upper bounds on the probability of correct reasoning for MCTS in the best-case and worst-case scenarios, respectively. These results illustrate that the probability of correct reasoning with MCTS is influenced by both the search width $b$ and the length of the reasoning path $L$.
Although MCTS provides a more sophisticated reasoning strategy and introduces additional complexity to mitigate snowball errors, it is important to note that selection errors also accumulate due to the increased complexity. This accumulation is reflected in the exponential term within the $\epsilon$ terms. 

\subsection{Comparison between BoN and MCTS}
Since MCTS introduces exponential selection errors due to its increased complexity, it is necessary to perform a fair comparison between BoN and MCTS. This challenge arises from the inability to fully characterize the reward model (RM) employed in practice. For instance, it is difficult to determine whether selecting a correct step from $N$ candidates in a single iteration is more or less challenging than selecting a correct step from $b$ candidates across $L$ iterations, particularly when $N = O(b^L)$.

To address this issue, we compare the two methods by examining the minimal value of $N$ required for BoN to achieve a probability of correct reasoning comparable to MCTS in both the best-case and worst-case scenarios, assuming the use of an ideal reward model, which ensures that whenever at least one correct candidate exists, $\forall k, \epsilon_k = 1$.

Under this assumption, we derive the minimal $N$ required for BoN to match the probability of correct reasoning with MCTS in the best-case scenario by comparing the upper bounds provided in \cref{lm:upper bound of the probability of correct reasoning with BoN} and \cref{lm:upper bound of the probability of correct reasoning with MCTS in best case}: 
\begin{corollary}\label{cor:comparison of BoN and MCTS in best case}
Despite the influence of RM, the minimal $N$ for making BoN obtain the comparable correct reasoning probability with MCTS in best case is $N_{\text{min}}^{\text{best}}=O(b)$. 
\end{corollary}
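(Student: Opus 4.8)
The plan is to compare, term by term, the two upper bounds supplied by \cref{lm:upper bound of the probability of correct reasoning with BoN} and \cref{lm:upper bound of the probability of correct reasoning with MCTS in best case} under the ideal-reward-model assumption, and then read off the threshold value of $N$ at which the BoN bound stops being dominated by the MCTS bound.

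First I would specialize both lemmas to the ideal RM. Since an ideal RM guarantees $\epsilon_k = 1$ whenever a correct candidate is present, the BoN bound collapses to $\operatorname{Pr}[\psi(\mathcal{R}) \leq \tau] \leq N^L \lambda_\tau^L e^{-L(L+1)/2}$ and the MCTS best-case bound collapses to $\operatorname{Pr}[\psi(\mathcal{R}) \leq \tau] \leq b^L \lambda_\tau^L e^{-L(L+1)/2}$. Both expressions share the common factor $\lambda_\tau^L e^{-L(L+1)/2}$, which depends only on the reasoning length $L$ and the correctness threshold $\tau$, quantities that are identical across the two methods in this comparison (in the best case MCTS terminates at a path of the same length $L$).

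Next I would make precise what "comparable correct reasoning probability" means here: the BoN upper bound should be at least the MCTS best-case upper bound, so that the probability mass reachable by BoN is no less than that reachable by MCTS. Cancelling the shared factor, this requirement reduces to $N^L \geq b^L$, i.e. $N \geq b$, so the minimal admissible choice is $N_{\min}^{\text{best}} = b = O(b)$. I would also note that because the width-expansion parameter enters both bounds with the same exponent $L$, the $L$-dependence cancels entirely, so the threshold is governed purely by the per-iteration branching factor $b$ — a point worth stating explicitly since it is the conceptual payoff of the corollary.

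The main obstacle — or rather the step needing the most careful wording — is justifying the substitution of a comparison of \emph{upper bounds} for a comparison of the true probabilities: matching the two bounds does not by itself force the underlying probabilities to coincide. I would handle this by stressing that throughout \cref{sec:correct_probablity_of_slow_thinking} the probability of $\tau$-correct reasoning is characterized, up to the modelling assumptions in \cref{lm:prob of correct thought}, precisely by these bounds, so equating the bounds is the natural and internally consistent notion of "comparable" in this framework; and that the ideal-RM assumption is exactly what removes the only factor ($\epsilon$) whose behaviour under BoN versus MCTS is genuinely hard to pin down, leaving a clean apples-to-apples comparison that yields the $O(b)$ estimate.
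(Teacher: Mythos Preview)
Your proposal is correct and follows essentially the same approach as the paper: specialize the BoN and best-case MCTS upper bounds under the ideal-RM assumption $\epsilon_k=1$, cancel the shared factor $\lambda_\tau^L e^{-L(L+1)/2}$, and read off $N^L\geq b^L\Rightarrow N\geq b$. Your extra paragraph justifying the comparison of upper bounds as the operative notion of ``comparable'' is a nice clarification, but the core argument is identical to the paper's derivation.
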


Similarly, we can derive the minimal $N$ required for BoN to match the probability of correct reasoning with MCTS in the worst-case scenario by comparing the upper bounds provided in \cref{lm:upper bound of the probability of correct reasoning with BoN} and \cref{lm:upper bound of the probability of correct reasoning with MCTS in worst case}: 
\begin{corollary}\label{cor:comparison of BoN and MCTS in worst case}
Despite the influence of RM, the minimal $N$ for making BoN obtain the comparable correct reasoning probability with MCTS in worst case is $N_{\text{min}}^{\text{worst}}=O(b^{\frac{L}{2}})$. 
\end{corollary}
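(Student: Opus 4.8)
The plan is to mirror the argument behind Corollary~\ref{cor:comparison of BoN and MCTS in best case}, but now pitting the BoN bound of Lemma~\ref{lm:upper bound of the probability of correct reasoning with BoN} against the worst-case MCTS bound of Lemma~\ref{lm:upper bound of the probability of correct reasoning with MCTS in worst case}. First I would invoke the ideal reward-model assumption, which sets $\epsilon_N = 1$ in the BoN bound and $\epsilon_{b^l} = 1$ for every $l = 1,\dots,L$ in the worst-case MCTS bound. After this substitution the two right-hand sides collapse to $N^L \lambda_\tau^L e^{-L(L+1)/2}$ for BoN and $\lambda_\tau^L (e/b)^{-L(L+1)/2}$ for worst-case MCTS.

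Next I would put the MCTS expression into a form directly comparable with the BoN one. Since $(e/b)^{-L(L+1)/2} = (b/e)^{L(L+1)/2} = b^{L(L+1)/2} e^{-L(L+1)/2}$, the worst-case MCTS bound becomes $\lambda_\tau^L b^{L(L+1)/2} e^{-L(L+1)/2}$. The two bounds now share the common factor $\lambda_\tau^L e^{-L(L+1)/2}$, so requiring the BoN upper bound to match the worst-case MCTS upper bound reduces to the single equation $N^L = b^{L(L+1)/2}$, i.e. $N = b^{(L+1)/2}$.

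Finally I would read off the order of growth: with the branching factor $b$ held fixed, $b^{(L+1)/2} = b^{1/2}\, b^{L/2} = O(b^{L/2})$, which yields $N_{\text{min}}^{\text{worst}} = O(b^{L/2})$ and completes the argument.

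I expect the main point requiring care to be interpretive rather than computational: treating equality of the \emph{derived upper bounds} as the operational meaning of ``comparable correct reasoning probability.'' This is the convention already used throughout \cref{sec:comparison_between_slow-thinking_framework} and in Corollary~\ref{cor:comparison of BoN and MCTS in best case}, and it is essentially forced, since the true probabilities depend on the unspecified reward model; the ideal-RM assumption is precisely what isolates the structural width-versus-depth contribution and makes the comparison well posed. I would state this explicitly, and also flag that the exponent $(L+1)/2$ is only asymptotically $L/2$, so the $O(\cdot)$ is to be read with $b$ fixed and $L$ growing.
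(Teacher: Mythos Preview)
Your proposal is correct and follows essentially the same route as the paper: it compares the BoN upper bound from Lemma~\ref{lm:upper bound of the probability of correct reasoning with BoN} with the worst-case MCTS upper bound from Lemma~\ref{lm:upper bound of the probability of correct reasoning with MCTS in worst case} under the ideal-RM assumption, equates the two, and solves for $N$. Your explicit flagging of the ``comparing upper bounds'' convention and the $(L+1)/2$ versus $L/2$ asymptotic point is a helpful clarification that the paper leaves implicit.
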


\cref{cor:comparison of BoN and MCTS in best case} and \cref{cor:comparison of BoN and MCTS in worst case} reveal that the minimal $N$ required for BoN to achieve a probability of correct reasoning comparable to MCTS is $O(b)$ in the best-case scenario and $O(b^{\frac{L}{2}})$ in the worst-case scenario.

Next, we analyze the total reasoning cost of BoN and MCTS when $N$ is set with corresponding $N_\text{min}$. Since BoN generates $N$ reasoning paths, each of length $L$, the total reasoning cost for BoN is $N \times L$. Using the results from the above corollaries, we can infer that when MCTS operates in the best-case scenario, the total reasoning cost of BoN is $O(bL)$. In contrast, when MCTS is in the worst-case scenario, the total reasoning cost of BoN increases to $O(Lb^{\frac{L}{2}})$. 
For MCTS, according to previous analyses, the total reasoning cost is $O(bL)$ in the best-case scenario and $O(b^L)$ in the worst-case scenario. We summarize the comparison of the total reasoning cost between BoN and MCTS in \cref{tab:comparison of BoN and MCTS}. 
\begin{table}[tp]
    \centering
    \begin{tabular}{ccc}
    \hline
               & \textbf{BoN} (comparable)       & \textbf{MCTS} (baseline) \\ \hline
    Best Case  & $O(bL)$             & $O(bL)$       \\
    Worst Case & $O(Lb^\frac{L}{2})$ & $O(b^L)$      \\ \hline
    \end{tabular}
    \caption{Comparison of the total reasoning cost between MCTS and its comparable BoN. We assume MCTS generates a reasoning path of length $L$ and expands $b$ child nodes at each expansion, while BoN generates $N_\text{min}$ reasoning paths, each of length $L$.}
    \label{tab:comparison of BoN and MCTS}
\end{table}

The results in \cref{tab:comparison of BoN and MCTS} indicate that when BoN and MCTS achieve a comparable probability of correct reasoning, the total reasoning cost of BoN remains close to that of MCTS. In the best-case scenario, their costs are asymptotically equivalent, while in the worst-case scenario, BoN's cost may exceed MCTS when $L$ is small but stays reasonable. As $L$ grows larger, BoN's cost can even fall below MCTS. This analysis shows that BoN can achieve similar reasoning accuracy to MCTS with comparable reasoning costs, and these findings are applicable to other external slow-thinking frameworks. 

In conclusion, \textbf{external slow-thinking methods introduce additional reasoning steps to mitigate the impact of snowball errors.} However, on one hand, the \textbf{inaccuracy of the reward function can result in the additional reasoning steps incurring extra selection costs}, which may decrease the probability of correct reasoning. On the other hand, \textbf{the effectiveness of mitigating snowball errors is primarily determined by the total reasoning cost}, with the specific framework having limited impact on the overall outcome. 
\subsection{Empirical Evaluation}
We empirically compare the accuracy of BoN and MCTS on two reasoning tasks: GSM8k~\cite{cobbe2021training} and PrOntoQA~\cite{saparov2022language}. Following prior work~\cite{feng2023alphazero}, we use the recommended settings to optimize MCTS and calculate the corresponding $N$ for BoN to align its reasoning cost with MCTS. Due to differences in how reasoning paths are generated, exact alignment is impractical. To address this, we define $N$ as reasonable if it lies between $\widetilde{N}_\text{res}$ (aligned reasoning steps) and $\widetilde{N}_\text{call}$ (aligned LLM calls). We evaluate BoN with three selection strategies: Self-Consistency, ORM Vote, and ORM Max. Details of the experimental setup are in \cref{app:experiment_setting:bon_mcts}, and results are presented in \cref{fig:mcts-vs-bon}. Furthermore, we also include additional experimental verifications on a planning task Game24~\cite{yao2024tree} in \cref{app:additional-verification-for-BoN-v.s.-MCTS}. 
Since PrOntoQA is a binary classification task with only true or false answers, increasing $N$ in the Self-Consistency strategy cannot improve BoN's performance without a reward model. In contrast, for GSM8k, where answers are diverse, increasing $N$ can enhance BoN's performance even without a reward model. For ORM Vote and ORM Max strategies, guided by the reward model, BoN achieves performance comparable to MCTS when $N$ lies between $\widetilde{N}_\text{res}$ and $\widetilde{N}_\text{call}$. Notably, when $N$ is near $\widetilde{N}_\text{res}$, BoN may slightly underperform compared to MCTS but not significantly. Conversely, setting $N$ to a larger value within this range allows BoN to match or even surpass MCTS. These findings align with previous observations of MCTS's limited success in LLM reasoning and support our theoretical analysis. 
\section{Related Work}
\label{sec:related_work}
\textbf{Information Theory.}
Information theory provides a theoretical basis for quantifying the information contained in random variables. The entropy $H(X)$ of a random variable $X$ measures its information content, while mutual information $I(X; Y) = H(X) - H(X|Y)$ quantifies the information shared between two variables. Applications of information theory span numerous fields, including bounding the generalization capacity of deep learning models~\cite{russo2019much,xu2017information} and improving task interpretability~\cite{slonim2001power,hu2019multi,west2019bottlesum}. Recently, it has been employed to analyze synthetic data generation~\cite{gan2024towards} and measure reasoning errors in LLMs~\cite{ton2024understanding}.

\textbf{Reasoning with LLMs.}
LLMs have made substantial progress in understanding and generation, particularly for complex reasoning tasks. Foundational work~\cite{brown2020language} established LLMs as few-shot learners, while Chain-of-Thought (CoT) prompting~\cite{wei2022chain} introduced multi-step reasoning by explicitly generating intermediate steps. Self-Consistency~\cite{wang2022self} further improved robustness by aggregating multiple reasoning paths. Information-theoretic approaches ~\cite{ton2024understanding} have also provided theoretical insights into quantifying and mitigating reasoning errors, showcasing the interplay between empirical advancements and theoretical frameworks.

\textbf{External Slow-Thinking.}
Test-time scaling has proven effective in enhancing LLM reasoning~\cite{snell2024scaling}, this timely inference scaling law has been empirically analyzed and verified~\cite{wu2024inference}. External slow-thinking methods, such as generating additional tokens~\cite{wei2022chain} and incorporating tree search algorithms~\cite{yao2024tree}, expand the reasoning space without retraining. Techniques like beam search~\cite{kang2024mindstar} and Monte Carlo Tree Search (MCTS)~\cite{zhang2024rest,feng2023alphazero} have further advanced reasoning capabilities. However, these methods require significantly more computational resources compared to simpler strategies like Best-of-N and often yield limited practical improvement. The underlying mechanisms of their effectiveness remain an area of ongoing exploration. 
\section{Conclusion}
\label{sec:conclusion}
In this paper, we analyzed the mechanisms behind the effectiveness of external slow-thinking methods. We linked snowball errors in LLM reasoning to reasoning errors using information theory and showed how external slow-thinking methods reduce errors by expanding the reasoning space. 
We also examined the trade-off between additional reasoning costs and the probability of achieving correct reasoning. 
Through comparisons of methods ranging from Best-of-N (BoN) to Monte Carlo Tree Search (MCTS), we found that the key factors influencing effectiveness are the reward function’s capability and the total reasoning cost, with the specific search framework playing a secondary role. 
Our findings suggest that optimizing reward functions and improving policy model reasoning capabilities are more essential for designing more effective external slow-thinking methods in a long run. 

\section*{Acknowledgments}
This research was supported by National Natural Science Foundation of China (No.62476277), National Key Research and Development Program of China (No. 2024YFE0203200), CCF-ALIMAMA TECH Kangaroo Fund (No.CCF-ALIMAMA OF 2024008), and Huawei-Renmin University joint program on Information Retrieval. We also acknowledge the support provided by the fund for building worldclass universities (disciplines) of Renmin University of China and by the funds from Beijing Key Laboratory of Big Data Management and Analysis Methods, Gaoling School of Artificial Intelligence, Renmin University of China, from Engineering Research Center of Next-Generation Intelligent Search and Recommendation, Ministry of Education, from Intelligent Social Governance Interdisciplinary Platform, Major Innovation \& Planning Interdisciplinary Platform for the “DoubleFirst Class” Initiative, Renmin University of China, from Public Policy and Decision-making Research Lab of Renmin University of China, and from Public Computing Cloud, Renmin University of China.

\section*{Impact Statement}
This paper presents work whose goal is to advance the field of 
Large Language Models. There are many potential societal consequences 
of our work, none which we feel must be specifically highlighted here.

\bibliography{references}
\bibliographystyle{icml2025}

\newpage
\appendix
\onecolumn
\section{Proofs}
\label{app:proofs}
\subsection{Proof of \cref{lm:information-loss-inequality}}
\label{app:proof-information-loss-inequality}
\begin{proof}
    Since $I(t_l;r_l)$ decreases with respect to $l$, we have 
    $$I(t_l;r_l) \leq \frac{\sum_{i}^{l-1} I(t_{i};r_{i})}{l-1},$$ 
    by the definition of mutual information, we can derive that 
    \begin{equation} \label{eq:super-linear-inequality}
    \begin{aligned}
        &H(t_l) - H(t_l | r_l) \leq \sum_{i}^{l-1} \left[\frac{H(t_{i})-H(t_{i} | r_{i})}{l-1}\right] \\
        &H(t_l | r_l) \geq \sum_{i}^{l-1} \frac{H(t_{i} | r_{i})}{l-1} + H(t_l) - \sum_{i}^{l-1} \frac{H(t_{i})}{l-1} \\
    \end{aligned}
    \end{equation}
    Since along the reasoning process, LLM is continiously introducing new information, we have $H(t_l) \geq \sum_{i}^{l-1} \frac{H(t_{i})}{l-1}$. 
    Therefore, we have:
    \begin{equation} \label{eq:lower-bound-of-right-part}
        \sum_{i}^{l-1} \frac{H(t_{i} | r_{i})}{l-1} + H(t_l) - \sum_{i}^{l-1}\frac{H(t_{i})}{l-1} \geq \sum_{i}^{l-1}\frac{H(t_{i} | r_{i})}{l-1}. 
    \end{equation}
    Hence, together with equation~(\ref{eq:super-linear-inequality}) and equation~(\ref{eq:lower-bound-of-right-part}), we can further derive that: 
    \begin{equation}
        H(t_{l}|r_{l}) \geq \sum_{i}^{l-1}\frac{H(t_{i} | r_{i})}{l-1} := \frac{H_{<l}(t | r)}{l-1}. 
    \end{equation}
    This finishes the proof. 
\end{proof}

\subsection{Proof of \cref{thm:lower-bound-of-probability-of-reasoning-error}}
\label{app:proof-lower-bound-of-probability-of-reasoning-error}
\begin{proof}
    Define an indicator random variable $E$, where $E=1$ if $e_l$ occurs, and $E=0$ otherwise. More specifically, 
    $$E:= \begin{cases}1 & \text { if } \hat{t}_l \neq t_l, \\ 0 & \text { if } \hat{t}_l=t_l .\end{cases}$$
    Based on the definition of entropy, we have:
    \begin{equation}
        H(t_l | \hat{t}_l)=H(E | \hat{t}_l) + H(t_l | E, \hat{t}_l). 
    \end{equation}
    For the first term $H(E|\hat{t}_l)$, we have: 
    \begin{equation}\label{eq:1st-term-entropy}
        H(E|\hat{t}_l) \leq H(E) := H_b(e_l). 
    \end{equation}
    For the second term $H(t_l | E, \hat{t}_l)$, we can expand it as: 
    \begin{equation}
        H(t_l | E, \hat{t}_l) = H(t_l | E=0, \hat{t}_l)P(E=0) + H(t_l | E=1, \hat{t}_l)P(E=1). 
    \end{equation}
    Since $E=0$ implies $\hat{t}_l = t_l$, we have $H(t_l | E=0, \hat{t}_l) = 0$. And since $E=1$ implies $\hat{t}_l \neq t_l$, we have $P(E=1)=P(e_l)$. 
    Therefore, we have: 
    \begin{equation}\label{eq:entropy-tl-E}
        H(t_l | E, \hat{t}_l) = H(t_l | E=1, \hat{t}_l)P(e_l). 
    \end{equation}
    Considering the upper bound of the entropy of a random variable, we have the following lemma.
    \begin{lemma}\label{lm:entropy-upper-bound}
        (Principle of maximum entropy.) Let $X$ be a random variable with support $\mathcal{X}$, the entropy $H(X)$ satisfies: 
        $$H(X) \leq \log(|\mathcal{X}|),$$
        with equality if and only if $X$ is uniformly distributed over $\mathcal{X}$. 
    \end{lemma}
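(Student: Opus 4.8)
The plan is to prove this statement --- the maximum-entropy property of the uniform distribution, essentially Gibbs' inequality --- via Jensen's inequality applied to the concave logarithm. Write $p$ for the probability mass function of $X$ and let $\mathcal{S} = \{x \in \mathcal{X} : p(x) > 0\}$ denote the support, so that $\mathcal{S} \subseteq \mathcal{X}$. First I would rewrite the entropy as an expectation: $H(X) = \sum_{x \in \mathcal{S}} p(x) \log \tfrac{1}{p(x)} = \mathbb{E}\!\left[\log \tfrac{1}{p(X)}\right]$.

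Next, since $\log$ is strictly concave on $(0,\infty)$, Jensen's inequality gives $\mathbb{E}\!\left[\log \tfrac{1}{p(X)}\right] \leq \log \mathbb{E}\!\left[\tfrac{1}{p(X)}\right]$. Evaluating the inner expectation, $\mathbb{E}\!\left[\tfrac{1}{p(X)}\right] = \sum_{x \in \mathcal{S}} p(x)\cdot\tfrac{1}{p(x)} = |\mathcal{S}| \leq |\mathcal{X}|$, so by monotonicity of $\log$ we obtain $H(X) \leq \log |\mathcal{S}| \leq \log |\mathcal{X}|$, which is the claimed bound. An equivalent one-line route, which I would mention as an alternative, is to observe that $0 \leq D_{\mathrm{KL}}(p \,\|\, u) = \log|\mathcal{X}| - H(X)$, where $u$ is the uniform law on $\mathcal{X}$; this uses only the non-negativity of KL divergence.

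For the equality characterization, I would argue in two parts. Equality in Jensen's inequality for the strictly concave function $\log$ holds if and only if the random variable $\tfrac{1}{p(X)}$ is almost surely constant, i.e. $p$ takes a single value $c$ on $\mathcal{S}$; together with $\sum_{x\in\mathcal{S}} p(x) = 1$ this forces $c = 1/|\mathcal{S}|$. Equality in $|\mathcal{S}| \leq |\mathcal{X}|$ holds exactly when $\mathcal{S} = \mathcal{X}$. Combining the two, $H(X) = \log|\mathcal{X}|$ if and only if $X$ is uniformly distributed over $\mathcal{X}$. (Via the KL route, equality holds iff $p = u$ by strict positivity of $D_{\mathrm{KL}}$ between distinct distributions.)

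This argument is entirely standard, so there is no real obstacle; the only point requiring care is bookkeeping around the possibility that $\mathcal{S}$ is a proper subset of $\mathcal{X}$ (handled by carrying the extra inequality $|\mathcal{S}| \leq |\mathcal{X}|$), and invoking the \emph{strict} form of Jensen's inequality so that the equality case is pinned down correctly.
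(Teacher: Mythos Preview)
Your proof is correct. The paper does not actually prove this lemma: it is stated inside the proof of \cref{thm:lower-bound-of-probability-of-reasoning-error} as a classical fact (the principle of maximum entropy) and invoked without argument, so there is no paper proof to compare against; your Jensen-inequality derivation (with the KL-divergence variant) is the standard textbook route and handles the equality case cleanly.
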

    
    Since $E=1$ indicates that $\hat{t}_l \neq t_l$, then when $\hat{t}_l$ is given as condition, random variable $t_l$ can be narrowed down to $|\mathcal{T}_l|-1$ different values, where $\mathcal{T}_l$ is the support of $t_l$. 
    Therefore, we have: 
    \begin{equation}\label{eq:entropy-tl-E1}
        H(t_l | E=1, \hat{t}_l) \leq \log(|\mathcal{T}_l|-1).
    \end{equation}

    Combining equation~(\ref{eq:entropy-tl-E}) and equation~(\ref{eq:entropy-tl-E1}), we have: 
    \begin{equation}\label{eq:2nd-term-entropy}
        H(t_l | E, \hat{t}_l) \leq \log(|\mathcal{T}_l|-1)P(e_l).
    \end{equation}
    Combining equation~(\ref{eq:1st-term-entropy}) and equation~(\ref{eq:2nd-term-entropy}), we have: 
    \begin{equation}\label{eq:entropy-tl-hat-tl}
        H(t_l | \hat{t}_l) \leq \log(|\mathcal{T}_l|-1)P(e_l) + H_b(e_l).
    \end{equation}
    Considering the markov chain $t_l \rightarrow r_l \rightarrow \hat{t}_l$, due to data processing inequality, we have: 
    $$
    I(t_l; \hat{t}_l) \leq I(t_l; r_l).
    $$
    Based on the definition of mutual information, we can further derive that: 
    \begin{equation}\label{eq:conditional-entropy-leq}
        H(t_l | r_l) \leq H(t_l | \hat{t}_l).  
    \end{equation}
    With equation~(\ref{eq:entropy-tl-hat-tl}) and equation~(\ref{eq:conditional-entropy-leq}), we have: 
    \begin{equation}\label{eq:Fano}
        H(t_l | r_l) \leq \log(|\mathcal{T}_l|-1)P(e_l) + H_b(e_l). 
    \end{equation}
    With lemma~\ref{lm:information-loss-inequality}, we have: 
    \begin{equation}\label{eq:information-loss-inequality}
        H(t_{l}|r_{l}) \geq \frac{H_{<l}(t | r)}{l-1}.
    \end{equation}
    Together with equation~(\ref{eq:Fano}) and equation~(\ref{eq:information-loss-inequality}), we have: 
    \begin{equation}
        \log(|\mathcal{T}_l|-1)P(e_l) + H_b(e_l) \geq \frac{H_{<l}(t | r)}{l-1}.
    \end{equation}
    Hence, we can further derive that: 
    \begin{equation}
        P(e_l) \geq \log^{-1}(|\mathcal{T}_l|-1)\left[ \frac{H_{<l}(t | r)}{l-1} - H_b(e_l) \right].
    \end{equation}
    This finishes the proof.
\end{proof}

\subsection{Proof of \cref{lm:prob of correct reasoning in width-expanding methods}}
\label{app:proof-prob-of-correct-reasoning-in-width-expanding-methods}
\begin{proof}
    Assuming that the probability of generating a $\tau$-correct thought is $p$, 
    then with $k$ sampling times, the probability of generating at least one $\tau$-correct thought is $1-(1-p)^k$. 

    With lemma~\ref{lm:prob of correct thought}, we have $p \leq \lambda e^{-l}$, 
    then the probability of generating a $\tau$-correct thought at the $l$-th layer is 
    \begin{equation}
        \operatorname{Pr}\left[ |\phi(r_l)-\phi(r_l^*)| \leq \tau \right] \leq 1-(1-\lambda_\tau e^{-l})^k.
    \end{equation}

    The probability of generating a $\tau$-correct response is the product of the probability of generating a $\tau$-correct thought in each layer,
    \begin{equation}
            \operatorname{Pr}\left[\psi(\mathcal{R}) \leq \tau\right] = \prod_{l=1}^{L} \epsilon_b\operatorname{Pr}\left[|\phi(r_l)-\phi(r_l^*)| \leq \tau\right] \leq \prod_{l=1}^{L}\epsilon_b[1-(1-\lambda_\tau e^{-l})^k]. 
    \end{equation}

\end{proof}

\subsection{Proof of \cref{thm:upper bound of probability of correct reasoning in width-expanding methods}}
\label{app:proof-upper-bound-of-probability-of-correct-reasoning-in-width-expanding-methods}
\begin{proof}
    The following inequality always holds when $k \geq 1$ and $0 \leq \lambda e^{-l} \leq 1$: 
    \begin{equation}
        (1-\lambda_\tau e^{-l})^k \geq 1-k\lambda_\tau e^{-l}.
    \end{equation}
    Then with lemma~\ref{lm:prob of correct reasoning in width-expanding methods}, we have:
    \begin{equation}
        \begin{aligned}
            \operatorname{Pr}\left[\psi(\mathcal{R}) \leq \tau\right] &\leq \prod_{l=1}^{L}\epsilon_b[1-(1-\lambda_\tau e^{-l})^k] \\
            &\leq \prod_{l=1}^{L}\epsilon_b[1-(1-k\lambda_\tau e^{-l})] \\
            &= \prod_{l=1}^{L}\epsilon_bk\lambda_\tau e^{-l} \\
            &= {\epsilon_b}^L k^L \lambda_\tau^L e^{-\frac{L(L+1)}{2}}. 
        \end{aligned}
    \end{equation}
    This finishes the proof. 
\end{proof}
\section{Relaxed Theoretical Results}
\label{app:relaxed-theoretical-results}
\cref{lm:prob of correct thought} was designed to facilitate subsequent analyses while maintaining readability. The negative exponential form provides a simple yet effective characterization of accuracy decay. Below, we would like to prove that our main results remain valid under a weaker assumption when $\operatorname{Pr}\left[|\phi(r_l)-\phi(r_l^*)| \leq \tau\right]$ decreases monotonically with $l$. 

\begin{proposition}\label{lm:relaxed prob of correct thought}
    (Relaxed probability of $\tau$-correct step for \cref{lm:prob of correct thought}.) Instead of requiring $\operatorname{Pr}\left[|\phi(r_l)-\phi(r_l^*)| \leq \tau\right] = \operatorname{min} \left(\lambda_\tau e^{-l},1 \right)$, we now assume only that the left-hand side decreases monotonically with $l$ and converges to $0$, i.e., 

    $$\operatorname{Pr}\left[|\phi(r_l)-\phi(r_l^*)| \leq \tau\right] = \operatorname{min} \left( \xi(l,\tau),1 \right),$$ 

    where $\xi(l,\tau) \geq 0$ decreases monotonically with $l$ and converges to $0$. 
\end{proposition}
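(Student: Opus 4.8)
\cref{lm:relaxed prob of correct thought} only weakens the accuracy-decay hypothesis of \cref{lm:prob of correct thought}, replacing $\min(\lambda_\tau e^{-l},1)$ by $\min(\xi(l,\tau),1)$ with $\xi(l,\tau)\ge 0$ monotonically decreasing to $0$; so what must actually be shown is that the downstream conclusions persist: namely \cref{lm:prob of correct response}, \cref{lm:prob of correct reasoning in width-expanding methods}, \cref{thm:upper bound of probability of correct reasoning in width-expanding methods}, the BoN/MCTS bounds from \cref{lm:upper bound of the probability of correct reasoning with BoN} through \cref{lm:upper bound of the probability of correct reasoning with MCTS in worst case}, and the comparisons in \cref{cor:comparison of BoN and MCTS in best case} and \cref{cor:comparison of BoN and MCTS in worst case}. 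The organizing observation is that none of those proofs uses the functional form $e^{-l}$: each uses only that the per-step success probability lies in $[0,1]$ and that the aggregate ``generation-decay'' factor enters every bound as one common multiplicative term. I would therefore set $\Lambda(L,\tau):=\prod_{l=1}^{L}\min(\xi(l,\tau),1)$ and argue that $\Lambda(L,\tau)$ plays exactly the role previously played by $\lambda_\tau^{L}e^{-L(L+1)/2}$.

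The plan has three core steps. First, re-run the proof of \cref{lm:prob of correct response}: conditioning on the premise that each earlier step is already $\tau$-correct, $\operatorname{Pr}[\psi(\mathcal{R})\le\tau]=\prod_{l=1}^{L}\operatorname{Pr}[|\phi(r_l)-\phi(r_l^*)|\le\tau]=\Lambda(L,\tau)$, with no further estimate needed. Second, re-run the proof of \cref{lm:prob of correct reasoning in width-expanding methods}: writing $p_l$ for the per-step probability, the bound $\operatorname{Pr}[\psi(\mathcal{R})\le\tau]\le\prod_{l=1}^{L}\epsilon_b[1-(1-p_l)^{k}]$ uses only $0\le p_l\le 1$, which holds for $p_l=\min(\xi(l,\tau),1)$. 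Third, for the analogue of \cref{thm:upper bound of probability of correct reasoning in width-expanding methods}, apply the elementary inequality $1-(1-p)^{k}\le kp$, valid for all $p\in[0,1]$ and integers $k\ge 1$, together with $p_l=\min(\xi(l,\tau),1)\le\xi(l,\tau)$, to get $\operatorname{Pr}[\psi(\mathcal{R})\le\tau]\le\prod_{l=1}^{L}\epsilon_b k\,\xi(l,\tau)=\epsilon_b^{L}k^{L}\Lambda(L,\tau)$; note the old side condition $\lambda_\tau e^{-l}\le 1$ is now unnecessary because the $\min$ makes the inequality applicable unconditionally.

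Then I would propagate these through the BoN/MCTS comparison by substituting the same expansion-width schedules as in the main text ($k=N$ in the first layer and $k=1$ afterwards for BoN; $k=b$ for MCTS best case; $k=b^{l}$ for MCTS worst case), which reproduces \cref{lm:upper bound of the probability of correct reasoning with BoN} through \cref{lm:upper bound of the probability of correct reasoning with MCTS in worst case} with $\lambda_\tau^{L}e^{-L(L+1)/2}$ replaced by $\Lambda(L,\tau)$; in particular the $b^{L(L+1)/2}$ / $(e/b)^{-L(L+1)/2}$ algebra of the worst case is untouched since it came entirely from $\prod_{l}b^{l}$. Because $\Lambda(L,\tau)$ appears identically on both sides of each comparison and cancels under the ideal-RM assumption $\epsilon_k\equiv 1$, \cref{cor:comparison of BoN and MCTS in best case} and \cref{cor:comparison of BoN and MCTS in worst case} are unchanged, giving $N_{\min}^{\text{best}}=O(b)$ and $N_{\min}^{\text{worst}}=O(b^{L/2})$, and the cost table is unaffected since the costs never involved $\xi$. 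To preserve the qualitative statement I would also record that monotone decrease to $0$ forces at-least-geometric decay of $\Lambda$: choosing $l_0$ with $\xi(l_0,\tau)\le\tfrac{1}{2}$ gives $\Lambda(L,\tau)\le 2^{-(L-l_0+1)}$ for $L\ge l_0$.

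The main obstacle is bookkeeping rather than conceptual: verifying that no step in the BoN/MCTS chain or the cost accounting secretly uses a property of $e^{-l}$ beyond nonnegativity, the post-$\min$ bound by $1$, monotonicity, and the limit $0$. The worst-case MCTS derivation is where I would check most carefully, since its inner factor is $1-(1-\min(\xi(l,\tau),1))^{b^{l}}$ and one must re-confirm that the elementary inequality still yields the clean product $\prod_{l}\epsilon_{b^{l}}b^{l}\xi(l,\tau)$ and that the exponent $L(L+1)/2$ survives the substitution intact.
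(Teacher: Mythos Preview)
Your proposal is correct and follows essentially the same route as the paper: the paper simply introduces the shorthand $\xi^{L}(l,\tau):=\prod_{l=1}^{L}\xi(l,\tau)$ and then restates each downstream bound with $\lambda_\tau^{L}e^{-L(L+1)/2}$ replaced by this product, leaving the (identical) re-derivations implicit, whereas you spell out the verification step by step. One small slip worth fixing: you define $\Lambda(L,\tau)$ with the $\min$ but later write $\prod_{l}\xi(l,\tau)=\Lambda(L,\tau)$; the paper sidesteps this by dropping the $\min$ in its product notation from the outset, which still yields a valid upper bound since $\min(\xi,1)\le\xi$.
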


Under this weaker condition, by noting that $\xi^{L}(l,\tau) := \prod_{l=1}^{L}\xi(l,\tau)$, we derive revised relaxed bounds for subsequent results in the main text presented below: 

\begin{lemma}\label{lm:relaxed prob of correct response}
    (Relaxed probability of $\tau$-correct reasoning for \cref{lm:prob of correct response}.) After relaxation, the probability of generating a $\tau$-correct response $\mathcal{R}$ is:
    \[
    \begin{aligned}
        \operatorname{Pr}\left[\psi(\mathcal{R}) \leq \tau\right] &= \prod_{l=1}^{L} \operatorname{Pr}\left[|\phi(r_l)-\phi(r_l^*)| \leq \tau\right] \\ &\leq \xi^{L}(l,\tau). 
    \end{aligned}
    \]
\end{lemma}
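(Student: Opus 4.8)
\textbf{Proof proposal for \cref{lm:relaxed prob of correct response}.}
The plan is to follow exactly the structure of the proof of \cref{lm:prob of correct response}, replacing the specific negative-exponential form by the generic decreasing function $\xi(l,\tau)$. First I would invoke the conditional-factorization already used in the main text: by \cref{def:response-correctness}, $\psi(\mathcal{R}) \leq \tau$ holds if and only if every step is $\tau$-correct, and since \cref{lm:prob of correct thought} (and its relaxation \cref{lm:relaxed prob of correct thought}) gives the probability of a step being $\tau$-correct \emph{conditioned on the previous step already being $\tau$-correct}, the chain rule for probability yields
\[
    \operatorname{Pr}\left[\psi(\mathcal{R}) \leq \tau\right] = \prod_{l=1}^{L} \operatorname{Pr}\left[|\phi(r_l)-\phi(r_l^*)| \leq \tau\right].
\]
This step is identical to the one in \cref{lm:prob of correct response} and requires no new argument.

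Next I would substitute the relaxed per-step bound. By \cref{lm:relaxed prob of correct thought}, each factor satisfies $\operatorname{Pr}\left[|\phi(r_l)-\phi(r_l^*)| \leq \tau\right] = \min(\xi(l,\tau),1) \leq \xi(l,\tau)$ (using $\xi \geq 0$ so the $\min$ is a valid upper bound by $\xi$ itself). Multiplying these inequalities over $l = 1,\dots,L$ and recalling the notational definition $\xi^{L}(l,\tau) := \prod_{l=1}^{L}\xi(l,\tau)$ introduced just before the lemma gives
\[
    \prod_{l=1}^{L} \operatorname{Pr}\left[|\phi(r_l)-\phi(r_l^*)| \leq \tau\right] \leq \prod_{l=1}^{L}\xi(l,\tau) = \xi^{L}(l,\tau),
\]
which is the claimed bound. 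Monotone convergence of $\xi$ to $0$ is not actually needed for the inequality itself; it only matters for the interpretive remark that the product still decays, so I would not use it in the formal derivation.

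I do not anticipate a genuine obstacle here — this is a routine generalization. The only mild subtlety worth stating cleanly is the conditional nature of the per-step probabilities: the product formula is a true chain-rule identity precisely because \cref{lm:relaxed prob of correct thought}'s premise is ``$r_{l-1}$ is already $\tau$-correct,'' so that $\operatorname{Pr}[r_1,\dots,r_L \text{ all } \tau\text{-correct}] = \prod_l \operatorname{Pr}[r_l \ \tau\text{-correct} \mid r_{l-1} \ \tau\text{-correct}]$. I would make that one line explicit and let the rest follow by taking products of the relaxed bounds.
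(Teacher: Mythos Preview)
Your proposal is correct and mirrors the paper's own (implicit) argument: the paper does not give a standalone proof of \cref{lm:relaxed prob of correct response} but simply states it as the direct analogue of \cref{lm:prob of correct response}, obtained by replacing $\lambda_\tau e^{-l}$ with $\xi(l,\tau)$ in the per-step bound and taking the product, exactly as you describe. Your explicit handling of the chain-rule justification and of the inequality $\min(\xi(l,\tau),1)\leq\xi(l,\tau)$ is, if anything, slightly more careful than the paper, which leaves both points tacit.
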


\begin{lemma} \label{lm:relaxed prob of correct reasoning in width-expanding methods}
    (Relaxed probability of $\tau$-correct reasoning in width-expanding methods for \cref{lm:prob of correct reasoning in width-expanding methods}.) After relaxation, for a beam-search-like strategy which samples $k$ steps and keeps $b$ steps as candidates at each expansion, the probability of obtaining a $\tau$-correct response is upper bounded by: 
    \[
        \operatorname{Pr}\left[\psi(\mathcal{R}) \leq \tau\right] \leq \prod_{l=1}^{L}\epsilon_b \left[ 1-\left( 1-\xi(l,\tau) \right)^k \right]. 
    \]
\end{lemma}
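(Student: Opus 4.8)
\textbf{Proof proposal for \cref{lm:relaxed prob of correct reasoning in width-expanding methods}.}
The plan is to mirror the proof of \cref{lm:prob of correct reasoning in width-expanding methods} almost verbatim, replacing the explicit accuracy-decay function $\lambda_\tau e^{-l}$ with the abstract monotone function $\xi(l,\tau)$ from \cref{lm:relaxed prob of correct thought}. First I would fix a layer $l$ and let $p$ denote the probability of generating a $\tau$-correct step at that layer, conditioned on the previous step being $\tau$-correct; by \cref{lm:relaxed prob of correct thought} we have $p \le \min(\xi(l,\tau),1) \le \xi(l,\tau)$ (using $\xi \ge 0$, and noting the bound is vacuous but still valid when $\xi(l,\tau)>1$). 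With $k$ independent samples at layer $l$, the probability that at least one of them is $\tau$-correct is $1-(1-p)^k$, which is monotone increasing in $p$, hence bounded above by $1-(1-\xi(l,\tau))^k$ whenever $\xi(l,\tau)\le 1$ (and trivially bounded by $1$ otherwise, matching the $\min$ in the statement of \cref{lm:relaxed prob of correct step} if one wishes to keep it; here I simply carry the inequality $\operatorname{Pr}[\,|\phi(r_l)-\phi(r_l^*)|\le\tau\,]\le 1-(1-\xi(l,\tau))^k$).

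Next I would account for the selection step: at each layer the beam-search-like strategy retains $b$ candidates and picks one, succeeding with probability $\epsilon_b$ (the reliability of the value function), exactly as in the original lemma. Combining generation and selection at layer $l$ gives a per-layer bound $\epsilon_b\bigl[1-(1-\xi(l,\tau))^k\bigr]$. Then I would invoke the chain-rule factorization already used to prove \cref{lm:prob of correct response} and \cref{lm:prob of correct reasoning in width-expanding methods}: a response $\mathcal{R}$ is $\tau$-correct iff every one of its $L$ steps is $\tau$-correct (given the preceding one), so $\operatorname{Pr}[\psi(\mathcal{R})\le\tau]$ is the product over $l=1,\dots,L$ of the per-layer probabilities, each of which is bounded as above. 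Taking the product yields
\[
\operatorname{Pr}\left[\psi(\mathcal{R}) \leq \tau\right] \leq \prod_{l=1}^{L}\epsilon_b\left[1-\left(1-\xi(l,\tau)\right)^k\right],
\]
which is the claimed bound.

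The argument is essentially bookkeeping, so I do not anticipate a genuine obstacle; the only point requiring a little care is the interaction between the $\min(\cdot,1)$ truncation in \cref{lm:relaxed prob of correct step} and the monotonicity step $p\le \xi(l,\tau)\Rightarrow 1-(1-p)^k\le 1-(1-\xi(l,\tau))^k$. When $\xi(l,\tau)>1$ the right-hand side need not lie in $[0,1]$, but since $1-(1-p)^k\le 1\le$ anything $\ge 1$, the stated inequality still holds (alternatively one keeps the $\min$ and the bound is $\min(1,\cdot)$); either way the product bound is valid. A secondary subtlety is the independence assumption across the $k$ samples at a fixed layer, which is inherited unchanged from the original proof and from the i.i.d.\ sampling model of the LLM, so no new justification is needed.
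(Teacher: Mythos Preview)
Your proposal is correct and mirrors the paper's approach exactly: the paper does not give a separate proof for this relaxed lemma but simply presents it as the direct analogue of \cref{lm:prob of correct reasoning in width-expanding methods} obtained by substituting $\xi(l,\tau)$ for $\lambda_\tau e^{-l}$, which is precisely the substitution you carry out step by step. Your discussion of the $\min(\cdot,1)$ truncation is in fact more careful than the paper, which ignores that edge case entirely.
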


\begin{theorem}\label{thm:relaxed upper bound of probability of correct reasoning in width-expanding methods}
    (Relaxed upper bound of the probability of $\tau$-correct reasoning in width-expanding methods for \cref{thm:upper bound of probability of correct reasoning in width-expanding methods}.) After relaxation, the probability of obtaining a $\tau$-correct response is upper bounded by: 
    \[
        \operatorname{Pr}\left[\psi(\mathcal{R}) \leq \tau\right] \leq \epsilon_b^L k^L \xi^{L}(l,\tau). 
    \]
\end{theorem}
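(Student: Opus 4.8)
The plan is to mirror the argument used for \cref{thm:upper bound of probability of correct reasoning in width-expanding methods}, with the explicit decay term $\lambda_\tau e^{-l}$ replaced by the generic per-step quantity $\xi(l,\tau)$ and with \cref{lm:relaxed prob of correct reasoning in width-expanding methods} playing the role of \cref{lm:prob of correct reasoning in width-expanding methods}. Concretely, I would start from
\[
\operatorname{Pr}\left[\psi(\mathcal{R}) \leq \tau\right] \leq \prod_{l=1}^{L}\epsilon_b\left[1-\left(1-\xi(l,\tau)\right)^k\right],
\]
and bound each factor $1-\left(1-\xi(l,\tau)\right)^k$ from above by $k\,\xi(l,\tau)$, after which the product collapses to the claimed form $\epsilon_b^L k^L \xi^{L}(l,\tau)$ by the definition $\xi^{L}(l,\tau):=\prod_{l=1}^{L}\xi(l,\tau)$.

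To make the per-factor bound rigorous, I would first reduce to the case $0\le\xi(l,\tau)\le1$ for all $l$. Since the per-step probability in \cref{lm:relaxed prob of correct thought} is the capped quantity $\min(\xi(l,\tau),1)$, one may replace $\xi$ by $\tilde\xi(l,\tau):=\min(\xi(l,\tau),1)$ without changing any probability appearing in the statement; $\tilde\xi$ remains non-negative, monotonically non-increasing in $l$, and convergent to $0$, so the hypotheses of \cref{lm:relaxed prob of correct thought} are preserved, and moreover $\prod_l\tilde\xi(l,\tau)\le\prod_l\xi(l,\tau)$, so this substitution only strengthens the target inequality. Then, with $0\le\xi(l,\tau)\le1$ and $k\ge1$ (the standard width-expansion regime), Bernoulli's inequality gives $\left(1-\xi(l,\tau)\right)^k\ge1-k\,\xi(l,\tau)$, hence $1-\left(1-\xi(l,\tau)\right)^k\le k\,\xi(l,\tau)$. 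Substituting termwise,
\[
\operatorname{Pr}\left[\psi(\mathcal{R}) \leq \tau\right] \leq \prod_{l=1}^{L}\epsilon_b\,k\,\xi(l,\tau) = \epsilon_b^{L}\,k^{L}\prod_{l=1}^{L}\xi(l,\tau) = \epsilon_b^{L}\,k^{L}\,\xi^{L}(l,\tau),
\]
which finishes the argument.

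The only step that is not purely mechanical is the reduction to $\xi(l,\tau)\le1$: Bernoulli's inequality in the form used needs the base $1-\xi(l,\tau)$ to be nonnegative, so the finitely many early steps where the uncapped $\xi$ may exceed $1$ must be handled first, which is precisely what the capping argument above accomplishes (and on those steps the per-step probability is already $1$ anyway). Beyond that, the proof is a termwise application of Bernoulli's inequality followed by multiplying the $L$ resulting factors, structurally identical to the non-relaxed case, so I expect no further obstacles.
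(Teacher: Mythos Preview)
Your proposal is correct and follows essentially the same route the paper takes: the paper does not spell out a separate proof for the relaxed theorem but clearly intends it to mirror the proof of \cref{thm:upper bound of probability of correct reasoning in width-expanding methods} in \cref{app:proof-upper-bound-of-probability-of-correct-reasoning-in-width-expanding-methods}, i.e., apply Bernoulli's inequality termwise to \cref{lm:relaxed prob of correct reasoning in width-expanding methods} and collect the product using $\xi^{L}(l,\tau):=\prod_{l=1}^{L}\xi(l,\tau)$. Your explicit capping argument to ensure $0\le\xi(l,\tau)\le 1$ before invoking Bernoulli is a point of additional care the paper does not make explicit (it simply assumes the analogous condition $0\le\lambda_\tau e^{-l}\le 1$ in the non-relaxed proof), but it is harmless and, if anything, makes the argument more rigorous.
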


\begin{lemma}\label{lm:relaxed upper bound of the probability of correct reasoning with BoN}
    (Relaxed upper bound of the probability of $\tau$-correct reasoning with BoN for \cref{lm:upper bound of the probability of correct reasoning with BoN}.) After relaxation, the probability of obtaining a $\tau$-correct response in BoN satisfies:
    \[
        \operatorname{Pr}\left[\psi(\mathcal{R}) \leq \tau\right] \leq \epsilon_N N^L \xi^{L}(l,\tau).
    \]
\end{lemma}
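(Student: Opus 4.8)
The plan is to follow the proof of \cref{lm:upper bound of the probability of correct reasoning with BoN} essentially verbatim, the only change being that the per-step success probability is drawn from the relaxed \cref{lm:relaxed prob of correct thought} rather than from the exponential form $\min(\lambda_\tau e^{-l},1)$. Recall the characterization of BoN used there: $N$ candidate steps are sampled at the first layer, a single step is generated at each of the remaining layers, and a reward model then selects one of the $N$ resulting length-$L$ paths, with $\epsilon_N$ denoting the probability that this selection returns a $\tau$-correct path whenever at least one exists. Thus BoN is a special width-expanding method in the sense of \cref{lm:prob of correct reasoning in width-expanding methods}, with width profile $k_1 = N$, $k_l = 1$ for $l \ge 2$, and a single selection step; consequently $\operatorname{Pr}[\psi(\mathcal{R}) \le \tau]$ factors as $\epsilon_N$ times the product over layers of the per-layer probability of producing a $\tau$-correct step.

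Next I would bound the layer factors using \cref{lm:relaxed prob of correct thought}. A single generated step is $\tau$-correct with probability $\min(\xi(l,\tau),1) \le \xi(l,\tau)$, which handles layers $l \ge 2$. At the first layer, drawing $N$ candidates makes the probability of obtaining at least one $\tau$-correct step equal to $1-(1-\min(\xi(1,\tau),1))^N$, and Bernoulli's inequality (applicable since $\min(\xi(1,\tau),1)\in[0,1]$ and $N \ge 1$) bounds this by $N\,\xi(1,\tau)$. Multiplying these layer factors together with the selection factor gives $\operatorname{Pr}[\psi(\mathcal{R}) \le \tau] \le \epsilon_N\, N \prod_{l=1}^{L}\xi(l,\tau) = \epsilon_N\, N\, \xi^{L}(l,\tau)$; since $N \ge 1$ and $L \ge 1$ imply $N \le N^{L}$, loosening this constant yields the claimed bound $\epsilon_N N^{L}\xi^{L}(l,\tau)$, which is exactly \cref{lm:upper bound of the probability of correct reasoning with BoN} with $\lambda_\tau^{L}e^{-L(L+1)/2}$ replaced by $\xi^{L}(l,\tau)$.

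The argument is almost entirely routine; the one point requiring care — the \emph{hard part}, such as it is — is the $\min(\cdot,1)$ truncation in \cref{lm:relaxed prob of correct thought}. Since $\xi(l,\tau)$ is monotonically decreasing in $l$ and converges to $0$, it can exceed $1$ only on a prefix of small indices; for every such $l$ the true probability is $1$, which is still $\le \xi(l,\tau)$ (and, at the first layer, $1 \le N \le N\,\xi(1,\tau)$), so the layer-wise inequalities and the resulting product are unaffected. Notably no convexity or exponential-specific structure is used anywhere, which is precisely why replacing the exponential ansatz by an arbitrary monotone $\xi$ leaves the bound intact; the same substitution propagates through \cref{thm:upper bound of probability of correct reasoning in width-expanding methods} and the other relaxed results in the identical fashion.
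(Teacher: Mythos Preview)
Your proposal is correct and matches the paper's (implicit) approach: neither \cref{lm:upper bound of the probability of correct reasoning with BoN} nor its relaxed version is given a separate proof in the paper beyond being declared a specialization of the width-expanding bound (\cref{thm:upper bound of probability of correct reasoning in width-expanding methods}/\cref{thm:relaxed upper bound of probability of correct reasoning in width-expanding methods}) to the BoN profile, and your argument carries out exactly that specialization with $\xi(l,\tau)$ in place of $\lambda_\tau e^{-l}$. As a bonus, your intermediate bound $\epsilon_N\, N\, \xi^{L}(l,\tau)$ is actually tighter than the stated $\epsilon_N N^{L}\xi^{L}(l,\tau)$, and the loosening via $N\le N^{L}$ is valid.
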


\begin{lemma}\label{lm:relaxed upper bound of the probability of correct reasoning with MCTS in best case}
    (Relaxed upper bound of the probability of $\tau$-correct reasoning with MCTS in best case for \cref{lm:upper bound of the probability of correct reasoning with MCTS in best case}.) After relaxation, the probability of obtaining a $\tau$-correct response in MCTS in the best case satisfies: 
    \[
        \operatorname{Pr}\left[\psi(\mathcal{R}) \leq \tau\right] \leq \epsilon_b^L b^L \xi^{L}(l,\tau).
    \]
\end{lemma}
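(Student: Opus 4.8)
The plan is to obtain this bound as a direct specialization of the relaxed width-expanding result, exactly mirroring how \cref{lm:upper bound of the probability of correct reasoning with MCTS in best case} follows from \cref{thm:upper bound of probability of correct reasoning in width-expanding methods} in the non-relaxed setting. First I would invoke the structural reduction already argued in the main text: in the best-case scenario the RAP-like MCTS expands only the deepest leaf node at each step, so after expanding $b$ children at each of the $L$ levels it behaves exactly like a beam search that samples $k = b$ candidates per layer and retains $b$ of them. Plugging $k = b$ into \cref{lm:relaxed prob of correct reasoning in width-expanding methods} immediately yields
\[
\operatorname{Pr}\left[\psi(\mathcal{R}) \leq \tau\right] \leq \prod_{l=1}^{L}\epsilon_b\left[1-\left(1-\xi(l,\tau)\right)^b\right].
\]

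Next I would simplify the bracketed term exactly as in the proof of \cref{thm:relaxed upper bound of probability of correct reasoning in width-expanding methods}. On the regime where the per-step probability is untruncated we have $0 \le \xi(l,\tau) \le 1$, and since $b \ge 1$, Bernoulli's inequality gives $(1-\xi(l,\tau))^b \ge 1 - b\,\xi(l,\tau)$, hence $1-(1-\xi(l,\tau))^b \le b\,\xi(l,\tau)$; on the truncated regime ($\xi(l,\tau) \ge 1$) the left side is at most $1 \le b\,\xi(l,\tau)$, so the same inequality holds unconditionally. Substituting term by term (legitimate because each factor $\epsilon_b \ge 0$) and using the shorthand $\xi^{L}(l,\tau) := \prod_{l=1}^{L}\xi(l,\tau)$ introduced before \cref{lm:relaxed prob of correct response}, I get
\[
\operatorname{Pr}\left[\psi(\mathcal{R}) \leq \tau\right] \leq \prod_{l=1}^{L}\epsilon_b\, b\, \xi(l,\tau) = \epsilon_b^L\, b^L\, \xi^{L}(l,\tau),
\]
which is the claimed bound.

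The only genuine content beyond bookkeeping is the reduction of best-case MCTS to beam search with sample size and beam width both equal to $b$; once that structural claim is granted, the argument is the identical monotone/Bernoulli manipulation used for \cref{thm:relaxed upper bound of probability of correct reasoning in width-expanding methods}, with $\lambda_\tau e^{-l}$ replaced throughout by $\xi(l,\tau)$. I do not anticipate a real obstacle: the one point deserving a sentence of care is that the $\min(\cdot,1)$ truncation in \cref{lm:relaxed prob of correct thought} does not spoil the inequality, but since truncation only lowers the per-step success probability and $b\,\xi(l,\tau) \ge 1$ whenever truncation is active, the product bound is preserved in all cases.
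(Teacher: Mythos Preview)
Your proposal is correct and mirrors the paper's own argument: the paper states that \cref{lm:upper bound of the probability of correct reasoning with MCTS in best case} follows from \cref{thm:upper bound of probability of correct reasoning in width-expanding methods} by setting $k=b$, and the relaxed version in Appendix~B is obtained by the same substitution with $\lambda_\tau e^{-l}$ replaced by $\xi(l,\tau)$. Your extra sentence handling the $\min(\cdot,1)$ truncation is more careful than the paper itself, but otherwise the route is identical.
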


\begin{lemma}\label{lm:relaxed upper bound of the probability of correct reasoning with MCTS in worst case}
    (Relaxed upper bound of the probability of $\tau$-correct reasoning with MCTS in worst case for \cref{lm:upper bound of the probability of correct reasoning with MCTS in worst case}.) After relaxation, the probability of generating a $\tau$-correct response in MCTS in the worst case satisfies: 
    \[
        \operatorname{Pr}\left[\psi(\mathcal{R}) \leq \tau\right] \leq b^{\frac{L(L+1)}{2}}\xi^{L}(l,\tau)\prod_{l=1}^{L}\epsilon_{b^l}. 
    \]
\end{lemma}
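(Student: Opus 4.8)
# Proof Proposal for Lemma~\ref{lm:relaxed upper bound of the probability of correct reasoning with MCTS in worst case}

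The plan is to mirror the derivation of the non-relaxed worst-case bound (\cref{lm:upper bound of the probability of correct reasoning with MCTS in worst case}) but substitute the relaxed accuracy profile $\xi(l,\tau)$ for the exponential $\lambda_\tau e^{-l}$ throughout, and then check that the algebra collapses to the stated form. First I would recall the worst-case characterization of MCTS: at layer $l$ the search has constructed a complete $b$-ary tree, so the number of samples drawn at that layer is $k = b^l$, and the selection among the $b^l$ candidates succeeds with probability $\epsilon_{b^l}$. Plugging $k = b^l$ into the relaxed width-expanding bound (\cref{lm:relaxed prob of correct reasoning in width-expanding methods}) gives
\[
\operatorname{Pr}\left[\psi(\mathcal{R}) \leq \tau\right] \leq \prod_{l=1}^{L}\epsilon_{b^l}\left[1-\left(1-\xi(l,\tau)\right)^{b^l}\right].
\]

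Next I would apply the elementary inequality $(1-x)^m \geq 1-mx$ for $x \in [0,1]$ and $m \geq 1$ (the same Bernoulli-type step used in the proof of \cref{thm:upper bound of probability of correct reasoning in width-expanding methods}), with $x = \xi(l,\tau)$ and $m = b^l$. This yields $1-\left(1-\xi(l,\tau)\right)^{b^l} \leq b^l\,\xi(l,\tau)$, so
\[
\operatorname{Pr}\left[\psi(\mathcal{R}) \leq \tau\right] \leq \prod_{l=1}^{L}\epsilon_{b^l}\,b^l\,\xi(l,\tau) = \left(\prod_{l=1}^{L} b^l\right)\left(\prod_{l=1}^{L}\xi(l,\tau)\right)\left(\prod_{l=1}^{L}\epsilon_{b^l}\right).
\]
Then I would evaluate $\prod_{l=1}^{L} b^l = b^{\sum_{l=1}^L l} = b^{L(L+1)/2}$ and recall the notational shorthand $\xi^{L}(l,\tau) := \prod_{l=1}^{L}\xi(l,\tau)$ introduced just before the relaxed lemmas, which immediately produces the claimed bound $b^{\frac{L(L+1)}{2}}\,\xi^{L}(l,\tau)\prod_{l=1}^{L}\epsilon_{b^l}$.

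One technical point to verify before the Bernoulli step is that $\xi(l,\tau) \in [0,1]$, which is guaranteed because $\operatorname{Pr}[|\phi(r_l)-\phi(r_l^*)|\leq\tau] = \min(\xi(l,\tau),1)$ forces us to work with the truncated value $\min(\xi(l,\tau),1)$; since $\min(\xi,1) \leq \xi$ when $\xi \geq 0$, replacing the truncated probability by $\xi(l,\tau)$ only weakens the bound, so the substitution is legitimate and the inequality direction is preserved. I do not anticipate any serious obstacle here: the result is essentially a transcription of the worst-case MCTS argument with the symbol $\lambda_\tau e^{-l}$ replaced by $\xi(l,\tau)$, and the only place where the exponential structure was genuinely used in the original proof — namely evaluating $\prod_l e^{-l} = e^{-L(L+1)/2}$ — is precisely the place that now gets absorbed into the undifferentiated product $\xi^{L}(l,\tau)$. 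The mildest care is needed in confirming that $m = b^l \geq 1$ so the Bernoulli inequality applies, which holds since $b \geq 1$ in any reasonable search setting.
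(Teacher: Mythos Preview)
Your proposal is correct and matches the paper's approach: the paper does not give a separate proof for the relaxed worst-case lemma but states that the relaxed results follow by the same arguments as the originals, and for the original worst-case MCTS bound (\cref{lm:upper bound of the probability of correct reasoning with MCTS in worst case}) it explicitly says the proof is that of \cref{thm:upper bound of probability of correct reasoning in width-expanding methods} with $k=b^l$. Your derivation carries out exactly this substitution in the relaxed setting, including the Bernoulli step and the evaluation $\prod_{l=1}^L b^l = b^{L(L+1)/2}$, and your remark on the truncation $\min(\xi,1)$ is a sensible technical check that the paper leaves implicit.
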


These modifications of relaxed results preserve the validity of \cref{cor:comparison of BoN and MCTS in best case}, \cref{cor:comparison of BoN and MCTS in worst case}, and \cref{tab:comparison of BoN and MCTS}, confirming that our theoretical results are robust even without the original exponential form in \cref{prop:reasoning_error}. 

\section{Experiment Settings}
\label{app:experiment_setting}
\subsection{Settings for Verifying Snowball Errors}
\label{app:experiment_setting:snowball}
For each question, we generated multiple responses from the LLMs and collected them as a set to represent $\bm{r}$. Additionally, a larger LLM, Llama-3.1-Nemotron-70B-Instruct-HF~\yrcite{wang2024helpsteer2preferencecomplementingratingspreferences}, was used to rewrite the ground-truth answers multiple times, creating a set of rewritten golden answers as an estimate of $\bm{t}$. 
We also utilize an outcome reward model Skywork-Reward-Llama-3.1-8B~\cite{liu2024skywork} to evaluate the quality of the responses. 
Following prior studies~\cite{qian-etal-2024-towards,ma2020hsic}, we utilized the Hilbert-Schmidt Independence Criterion (HSIC)~\cite{gretton2005measuring} as an estimator of mutual information, $I(t_l; r_l)$. 

In our experiments, we generated responses for each question in the GSM8k dataset~\cite{cobbe2021training} 10 times using the tested LLMs to construct the response set $\bm{r}$. Similarly, the ground-truth answers were rewritten 10 times using the Llama-3.1-Nemotron-70B-Instruct-HF model~\cite{wang2024helpsteer2preferencecomplementingratingspreferences} to form a set of rewritten golden answers, serving as an approximation of $\bm{t}$.

The Hilbert-Schmidt Independence Criterion (HSIC)~\cite{gretton2005measuring} was employed to estimate the mutual information $I(t_l; r_l)$ between the response set and the rewritten golden answers. The bandwidth parameter $\sigma$ for the Gaussian kernel used in HSIC computation was set to 50. To account for differences in response lengths, the estimated HSIC values were further normalized to obtain a per-token measure, ensuring fair comparison across responses of varying lengths.

To ensure consistency in response generation, we utilized the following reasoning prompt for the tested LLMs: 
\begin{tcolorbox}[colback=white,colframe=black]
    Please answer the question step by step and put the final answer in $\backslash$boxed\{\}. 
\end{tcolorbox}

For rewriting the ground-truth answers, the following rewriting prompt was used: 
\begin{tcolorbox}[colback=white,colframe=black]
You will be given a problem-solving process. Please rewrite this process without changing its logic or content. Ensure that the output includes only the rewritten process and nothing else.

**Problem-Solving Process:**
\{input\}

**Rewritten Process:**
\end{tcolorbox}

This setup was carefully designed to capture the relationship between the generated responses and the golden answers while controlling for logical consistency and content fidelity during the rewriting process. 

\subsection{Settings for Comparison between BoN and MCTS}
\label{app:experiment_setting:bon_mcts}
Given a baseline MCTS which expands for $p$ times, and expands $b$ child nodes at each expansion, 
with the aim to equal the times of calling LLM for inference, we can calculate the corresponding value of $N$ for BoN as $\widetilde{N}_\text{call} = p \times b$. 
Similarly, with the aim to equal the times of reasoning steps, given the average length of reasoning paths is $L$, we can calculate the corresponding value of $N$ for BoN as $\widetilde{N}_\text{res} = \frac{p \times b}{L}$. 

To ensure a fair and comprehensive comparison, we consider any $N$ within the range of $\widetilde{N}_\text{call}$ to $\widetilde{N}_\text{res}$ as a reasonable value for BoN to achieve a comparable reasoning cost to MCTS. Subsequently, we evaluated the performance of BoN using three different selection strategies: 
(1) Self-Consistency: Select the most frequent final answer among the $N$ reasoning results. 
(2) ORM Vote: Select the final answer with the highest total ORM scores across all reasoning paths. 
(3) ORM Max: Select the result of the reasoning path with the highest individual ORM score. 

In our experiments, we compared the performance of BoN and MCTS in the context of the Snowball task. 
We first determine the baseline MCTS setting according to the recommendation in previous work~\cite{feng2023alphazero}. 
Specifically, in GSM8k, we set the tree max width to $6$ Tree Max depth to $8$. 
In PrOntoQA, we set the tree max width to $6$ and Tree Max depth to $15$. 
And in Game24, we set the tree max width to $20$ and Tree Max depth to $4$. 

Subsequently, we trace the search process of the baseline MCTS and statics the average expansion width $b$ and average expansion times $p$ for the two tasks. 
We then estimate the ideal average reasoning steps $L$ for the two tasks by analyzing the ground-truth reasonings. 
Finally, we calculate corresponding $\widetilde{N}_\text{call}$ and $\widetilde{N}_\text{res}$ according to above values. 
The detailed results are shown in Table~\ref{tab:bon_mcts_setting}. 

\begin{table}[h]
    \centering
    \begin{tabular}{cccc}
    \hline
                                         & \textbf{GSM8k} & \textbf{PrOntoQA} & \textbf{Game24} \\ \hline
    \textbf{avg. $b$}                    & 4.26           & 1.67     & 4.56         \\
    \textbf{avg. $p$}                    & 4.54           & 9.45     & 3.99         \\
    \textbf{avg. $L$}                    & 3.11           & 4.00     & 3.00         \\ \hline
    \textbf{$\widetilde{N}_\text{call}$} & 19.40          & 15.77    & 18.24         \\
    \textbf{$\widetilde{N}_\text{res}$}  & 6.23           & 3.94     & 6.08         \\ \hline
    \end{tabular}
    \caption{Settings for BoN and MCTS in GSM8k, PrOntoQA and Game24.}
    \label{tab:bon_mcts_setting}
\end{table}
\section{Additional Empirical Verification}
\label{app:additional-empirical-verification}

\subsection{Snowball Error Verifications on Larger LLMs}
\label{app:snowball-error-on-larger-llms}
For a more robust verification of our theoretical findings, we have conducted extensive analyses on larger language models (Qwen2.5-14B-Instruct and Qwen2.5-32B-Instruct) in addition to the 7B/8B models presented in \cref{fig:MI-and-Reward}. We maintained identical experimental settings and workflow as described in the original experiments to ensure methodological consistency, and we present the additional results in \cref{fig:additional-snowball-error}. 

Our key findings from these additional experiments demonstrate that the MI decay pattern remains consistent across larger model sizes, exhibiting similar behavior to smaller models. Also, response quality continues to show a negative correlation with output length, as observed in our original experiments. 

\begin{figure}[tp]
    \vskip 0.2in
    \begin{center}
    \centerline{\includegraphics[width=\linewidth]{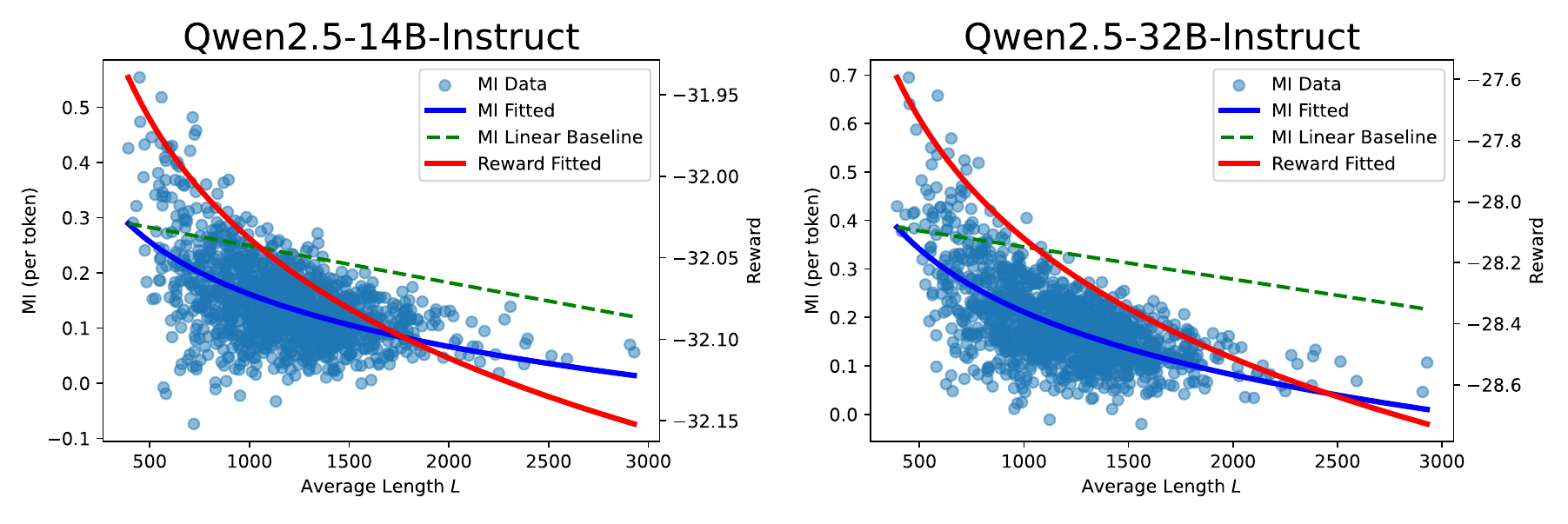}}
    \caption{Additional snowball error verifications. We have conducted extra experiments on larger LLMs, including Qwen2.5-14B-Instruct and Qwen2.5-32B-Instruct, with the same setting as results in \cref{fig:MI-and-Reward} of our main text.} 
    \label{fig:additional-snowball-error}
    \end{center}
    \vskip -0.2in
\end{figure}

\subsection{Snowball Error Analysis by Difficulty Levels}
\label{app:snowball-error-by-difficulty-levels}
We have conducted verifications for the snowball errors at different difficulty levels. 
The experiments are performed on MATH-500~\cite{lightman2023let}, where the questions are divided into 5 different difficulty levels (level 1 for the simplest, level 5 for the most difficult). The results are presented in \cref{fig:snowball-error-by-difficulty}.
In (a), we present the relationship between the estimated MI and average response length at different levels. For all 5 levels, we observe similar MI decay phenomena like \cref{fig:MI-and-Reward} in our main text. 
In (b), we present the relationship between the average accuracy and the length of responses at different levels. For all 5 levels, we observed that accuracy generally decreases with response length. Furthermore, the accuracy of the simplest level 1 decreases significantly when the length increases, demonstrating the harm of overthinking. These results demonstrate that the MI decay phenomenon is highly relevant to the response length even when the influence of question difficulty is similar. 

\begin{figure}[tp]
    \vskip 0.2in
    \begin{center}
    \centerline{\includegraphics[width=\linewidth]{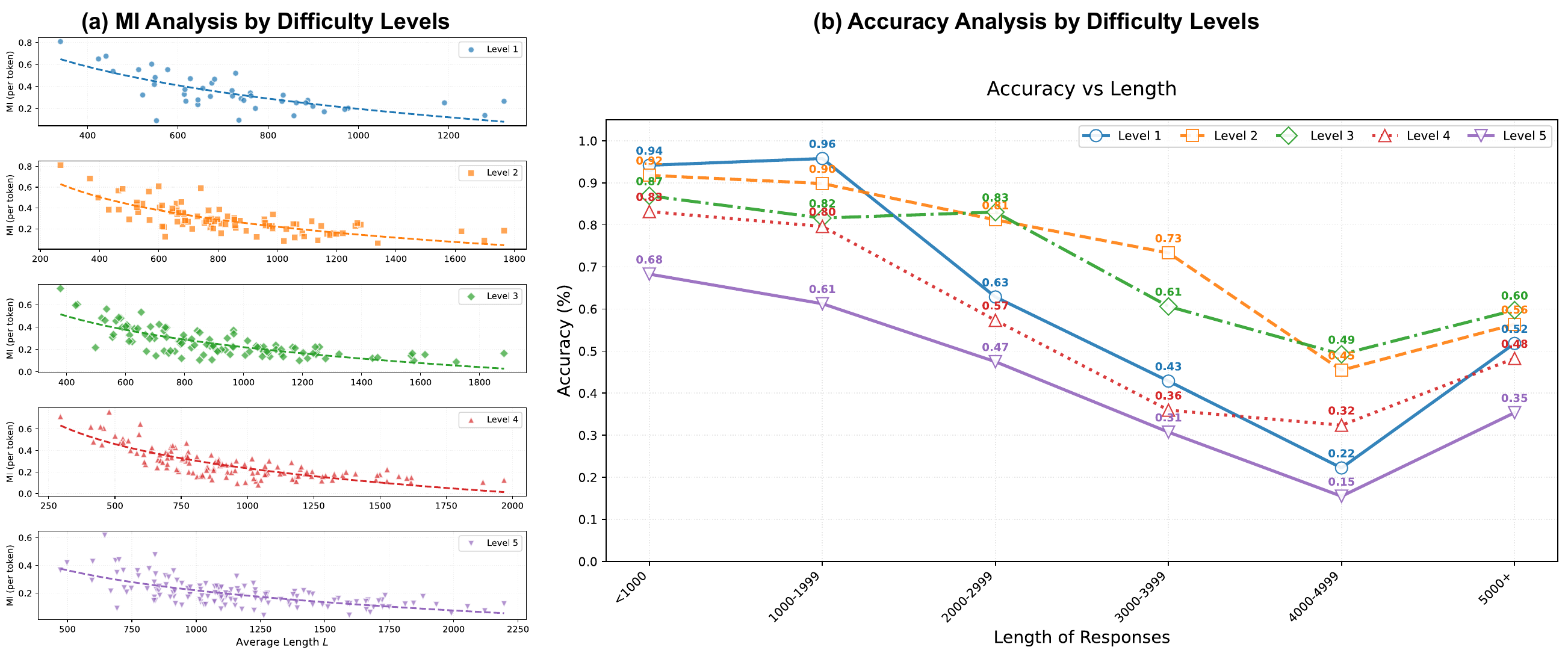}}
    \caption{Snowball error analysis by different difficulty levels. We have conducted extra experiments to analyze the mutual information (MI) and accuracy at different difficulty levels. The questions at the same level share similar difficulty. (a) The MI analysis by difficulty levels. (b) The accuracy analysis by different difficulty levels.} 
    \label{fig:snowball-error-by-difficulty}
    \end{center}
    \vskip -0.2in
\end{figure}

\subsection{Influence of $k$}
\label{app:influence-of-k}
For better understanding \cref{thm:upper bound of probability of correct reasoning in width-expanding methods}, we have conducted an analysis about the influence of $k$ in the theoretical findings, which is presented in \cref{fig:influence-of-k}. The experiments are performed on GSM8k and PrOntoQA. 

The results illustrate that: (1) the reasoning correctness increases when the reasoning cost $k$ increases, which aligns with the theoretical findings in \cref{thm:upper bound of probability of correct reasoning in width-expanding methods}; (2) for simpler PrOntoQA task where the value function is more reliable, the accuracy increases faster than GSM8k when $k$ increases. 
\begin{figure}[tp]
    \vskip 0.2in
    \begin{center}
    \centerline{\includegraphics[width=0.5\linewidth]{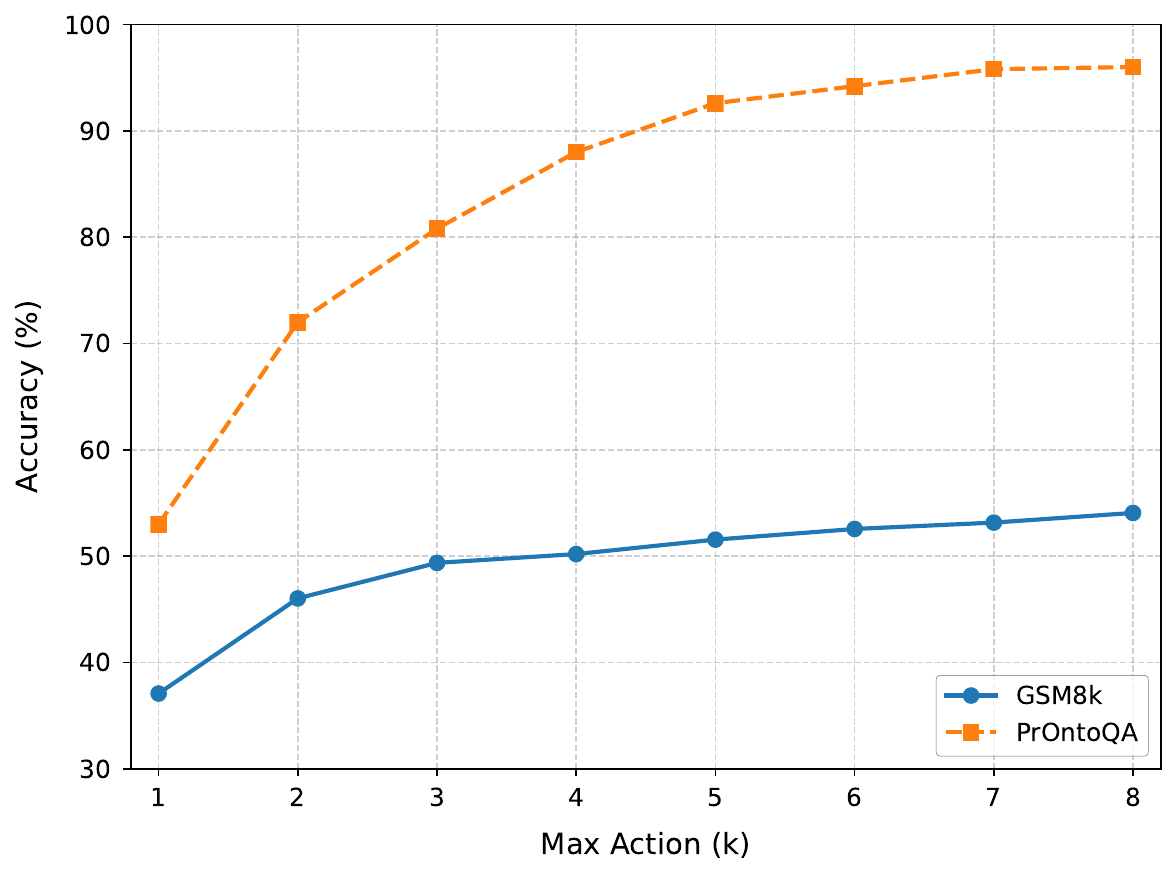}}
    \caption{Influence of $k$. We have conducted extra experiments to further verify \cref{thm:upper bound of probability of correct reasoning in width-expanding methods} by examining the influence of $k$. The experiments are performed on GSM8k and PrOntoQA, and we vary the value of the max-action of an MCTS while keeping other hyperparameters optimal, according to previous studies. } 
    \label{fig:influence-of-k}
    \end{center}
    \vskip -0.2in
\end{figure}

\subsection{Additional Verification for BoN v.s. MCTS}
\label{app:additional-verification-for-BoN-v.s.-MCTS}
We conducted additional verification experiments using the Game24 benchmark~\cite{yao2024tree}, which requires solving arithmetic puzzles to obtain the number 24. The setting is similar with \cref{fig:mcts-vs-bon}, and we present the results in \cref{tab:additional-bon-mcts}. 

This result illustrates a similar conclusion to Fig.3, when the total reasoning cost is comparable, BoN can achieve a comparable and even better performance than MCTS. 
\begin{table}[tp]
    \centering
    \begin{tabular}{l*{10}{c}}
    \toprule
    \textbf{$N$} & \textbf{2} & \textbf{4} & \textbf{6} & \textbf{8} & \textbf{10} & \textbf{12} & \textbf{14} & \textbf{16} & \textbf{18} & \textbf{20} \\
    \midrule
    \textbf{ORM-Vote} & 17.96\% & 29.01\% & 38.67\% & 46.13\% & 50.83\% & 53.87\% & 59.94\% & 63.81\% & 66.85\% & 67.68\% \\
    \textbf{ORM-Max} & 17.96\% & 29.01\% & 38.67\% & 46.13\% & 50.83\% & 53.87\% & 59.94\% & 63.81\% & 67.13\% & 68.23\% \\
    \bottomrule
    \end{tabular}
    \caption{Additional verification for BoN v.s. MCTS in Game24. The baseline MCTS obtain an accuracy of 64.80\% where $\widetilde{N}_\text{res}=6.08$ and $\widetilde{N}_\text{call}=18.24$. }
    \label{tab:additional-bon-mcts}
\end{table}

\section{More External Slow-Thinking Mechanisms Analysis}
\label{app:more_mechanism}
The external slow-thinking methods sometimes not only expand the width of the search space, but also perform repeatedly sampling and evaluating. 

For example, besides expanding the width, Lookahead Search increase the computation steps in evaluating the thought by looking ahead for $r$ steps. 
This can be seen as a test of whether the current thought is good enough, i.e., affecting $\epsilon_b$. 

An intuitive understanding is that the more steps we look ahead, the more likely we can find the best thought. 
This is due to the fact that the probability of generating a correct thought is related to the layer index $l$, and the probability of generating a correct thought increases with the layer index $l$.

\begin{definition}
    ($\delta$-wrong thought.) A thought $r_l$ is considered as $\delta$-wrong if the difference between the thought and the golden thought is larger than $\delta$, i.e., $|\phi(r_l)-\phi(r_l^*)| \geq \delta$. 
\end{definition}

\begin{lemma}
    (probability of generating $\delta$-wrong thought.) The probability of generating a $\delta$-wrong thought is:
    \begin{equation}
        \operatorname{Pr}\left[|\phi(r_l)-\phi(r_l^*)| \geq \delta\right] = \max(1-\lambda_\delta e^{-l},0).
    \end{equation}
\end{lemma}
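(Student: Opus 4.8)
The plan is to mirror the derivation of \cref{lm:prob of correct thought} but track the complementary event. By \cref{prop:reasoning_error} and the hypothesis $P(e_l) = 1 - \lambda e^{-l}$ that underlies \cref{lm:prob of correct thought}, the probability of a $\tau$-correct step is $\min(\lambda_\tau e^{-l}, 1)$. Since a $\delta$-wrong thought and a $\tau$-correct thought are complementary events once we identify $\tau = \delta$ — a step is either within threshold or outside it — the probability of the $\delta$-wrong event is simply $1$ minus the probability of the correct event.

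First I would write $\operatorname{Pr}[|\phi(r_l)-\phi(r_l^*)| \geq \delta] = 1 - \operatorname{Pr}[|\phi(r_l)-\phi(r_l^*)| < \delta]$, and then invoke \cref{lm:prob of correct thought} with correctness threshold $\delta$ (absorbing the distinction between strict and non-strict inequality, which is immaterial for a continuous quality measure) to substitute $\operatorname{Pr}[|\phi(r_l)-\phi(r_l^*)| \leq \delta] = \min(\lambda_\delta e^{-l}, 1)$. This yields
\[
\operatorname{Pr}\left[|\phi(r_l)-\phi(r_l^*)| \geq \delta\right] = 1 - \min(\lambda_\delta e^{-l}, 1) = \max(1 - \lambda_\delta e^{-l}, 0),
\]
using the identity $1 - \min(a,1) = \max(1-a, 0)$. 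This is exactly the claimed expression.

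The only subtlety worth a sentence is the clipping: when $l$ is small enough that $\lambda_\delta e^{-l} \geq 1$, the correct-step probability saturates at $1$, so the $\delta$-wrong probability is $0$; the $\max$ with $0$ handles this, just as the $\min$ with $1$ handled the analogous boundary case in \cref{lm:prob of correct thought}. I would also note, as in the remark following \cref{lm:prob of correct thought}, that this exact negative-exponential form is a modeling convenience and that the relaxed monotone form of \cref{app:relaxed-theoretical-results} would give $\max(1 - \xi(l,\delta), 0)$ with the same argument.

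The main obstacle is essentially expository rather than technical: one must be careful that Definitions~\ref{def:step-correctness} and the new $\delta$-wrong definition together with the strict/non-strict inequalities partition the outcome space, so that complementation is valid; and one must keep straight that $\lambda_\delta$ here plays the role of $\lambda_\tau$ with $\tau \leftarrow \delta$, inheriting its dependence on the threshold. Beyond that, the proof is a one-line algebraic identity applied to \cref{lm:prob of correct thought}, so no real difficulty arises.
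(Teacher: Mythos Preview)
Your proposal is correct and is exactly the natural derivation: the paper does not spell out a proof for this lemma, treating it as an immediate counterpart to \cref{lm:prob of correct thought}, and your complementation argument with $\tau \leftarrow \delta$ together with the identity $1-\min(a,1)=\max(1-a,0)$ is precisely how one obtains it.
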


\begin{definition}
(distinguishable condition.) We now assume that $\epsilon_b$ is guaranteed only if the correct thought and woring thuought are distinguishable. 
That is, given a correct thought $r_l$ (should be selected) and a wrong thought $r_l^{-}$ (should not be selected), the following inequality holds:
\begin{equation}\label{eq:distinguishable condition}
    \operatorname{Pr}\left[ \phi(r_l)-\phi(r_l^{-}) \geq \delta-\tau \right].
\end{equation}
\end{definition}

We further analysis the lower bound of eq.~(\ref{eq:distinguishable condition}) in the following theorem. 
\begin{theorem}\label{thm:lower bound of distinguishable condition}
    (lower bound of distinguishable condition.) The distinguishable condition is guaranteed with the following probability lower bound with the condition when $l \geq \ln \lambda_\tau$:
    \begin{equation}
        \operatorname{Pr}\left[ \phi(r_l)-\phi(r_l^{-}) \geq \delta-\tau \right] \geq \lambda_\tau e^{-l} - \lambda_\tau \lambda_\delta e^{-2l}. 
    \end{equation}
\end{theorem}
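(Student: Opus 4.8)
The plan is to bound the probability of the event $\phi(r_l)-\phi(r_l^-) \geq \delta-\tau$ from below by reducing it to a conjunction of two simpler events whose probabilities are already controlled by the $\tau$-correct and $\delta$-wrong machinery. First I would observe that if $r_l$ is $\tau$-correct, then $\phi(r_l) \geq \phi(r_l^*) - \tau$, and if the competing thought $r_l^-$ is $\delta$-wrong in the "below" direction, then $\phi(r_l^-) \leq \phi(r_l^*) - \delta$ (here one uses that the golden thought is the reference point and a wrong thought falls short of it by at least $\delta$). Subtracting these gives $\phi(r_l)-\phi(r_l^-) \geq \delta - \tau$, so the target event contains the intersection of "$r_l$ is $\tau$-correct" and "$r_l^-$ is $\delta$-wrong."

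Next I would lower-bound the probability of this intersection. Treating the two draws as independent (the standard sampling assumption used throughout the paper), the intersection probability factorizes as $\operatorname{Pr}[|\phi(r_l)-\phi(r_l^*)|\leq\tau] \cdot \operatorname{Pr}[|\phi(r_l^-)-\phi(r_l^*)|\geq\delta]$. Substituting \cref{lm:prob of correct thought}, which gives $\min(\lambda_\tau e^{-l},1)$, and the lemma on $\delta$-wrong thoughts, which gives $\max(1-\lambda_\delta e^{-l},0)$, the product becomes $\min(\lambda_\tau e^{-l},1)\cdot\max(1-\lambda_\delta e^{-l},0)$. Under the stated condition $l \geq \ln\lambda_\tau$ we have $\lambda_\tau e^{-l}\leq 1$, so the first factor is exactly $\lambda_\tau e^{-l}$; and assuming $1-\lambda_\delta e^{-l}\geq 0$ (which holds for $l$ large enough, consistent with the regime considered), the second factor is $1-\lambda_\delta e^{-l}$. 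Multiplying out yields $\lambda_\tau e^{-l} - \lambda_\tau\lambda_\delta e^{-2l}$, which is exactly the claimed bound.

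The main obstacle I anticipate is making the geometric/definitional reduction in the first step airtight: the definitions of $\tau$-correct (\cref{def:step-correctness}) and $\delta$-wrong are stated with absolute values, so one must argue carefully about signs — in particular that the "bad" competing thought $r_l^-$ underperforms the golden thought rather than overshooting it, so that $\phi(r_l^*)-\phi(r_l^-)\geq\delta$ rather than merely $|\phi(r_l^*)-\phi(r_l^-)|\geq\delta$. I would handle this by taking the convention (implicit in the setup) that the golden step is the quality-maximizer, so any other step has quality at most $\phi(r_l^*)$, whence the absolute value can be dropped in the favorable direction. A secondary, more routine point is the independence of the two samples $r_l$ and $r_l^-$, which I would simply invoke as part of the sampling model already used in \cref{lm:prob of correct reasoning in width-expanding methods}; with that in hand, the remaining manipulation is just the case analysis on the $\min$ and $\max$ under the hypothesis $l\geq\ln\lambda_\tau$.
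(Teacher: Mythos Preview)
Your proposal is correct and follows essentially the same route as the paper: rewrite $\phi(r_l)-\phi(r_l^-)$ as $[\phi(r_l^*)-\phi(r_l^-)]-[\phi(r_l^*)-\phi(r_l)]$, observe that the intersection of $\{\phi(r_l^*)-\phi(r_l^-)\geq\delta\}$ and $\{\phi(r_l^*)-\phi(r_l)\leq\tau\}$ is contained in the target event, factor by independence, and plug in the $\tau$-correct and $\delta$-wrong probabilities. One minor difference: the paper avoids your side assumption $1-\lambda_\delta e^{-l}\geq 0$ by simply using $\max(1-\lambda_\delta e^{-l},0)\geq 1-\lambda_\delta e^{-l}$, which holds unconditionally; and your explicit discussion of the sign convention (golden step as quality-maximizer) is actually more careful than the paper, which silently drops the absolute values.
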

\begin{proof}
    \begin{equation}\label{eq:lower bound of distinguishable condition}
        \begin{aligned}
        \operatorname{Pr}\left[ \phi(r_l)-\phi(r_l^{-}) \geq \delta-\tau \right] &= \operatorname{Pr}\left[ \phi(r_l^*)-\phi(r_l^{-}) - \left[ \phi(r_l^*)-\phi(r_l) \right] \geq \delta-\tau \right] \\
        &\geq \operatorname{Pr}\left[ \phi(r_l^*)-\phi(r_l^{-}) \geq \delta \right] \times \operatorname{Pr}\left[ \phi(r_l^*)-\phi(r_l) \leq \tau \right]. 
        \end{aligned}
    \end{equation}
    For the first term, since $\operatorname{Pr}\left[ \phi(r_l^*)-\phi(r_l^{-}) \geq \delta \right] = \max(1-\lambda_\delta e^{-l},0)$, we have:
    \begin{equation}
        \operatorname{Pr}\left[ \phi(r_l^*)-\phi(r_l^{-}) \geq \delta \right] \geq 1-\lambda_\delta e^{-l}. 
    \end{equation} 

    While for the second term, since $l \geq \ln \lambda_\tau$, we have $\lambda_\tau e^{-l} \leq 1$, thus
    \begin{equation}
        \operatorname{Pr}\left[ \phi(r_l^*)-\phi(r_l) \leq \tau \right] = \min(\lambda_\tau e^{-l},1) = \lambda_\tau e^{-l}, 
    \end{equation}

    With eq~(\ref{eq:lower bound of distinguishable condition}), we have
    \begin{equation}
        \begin{aligned}
        \operatorname{Pr}\left[ \phi(r_l)-\phi(r_l^{-}) \geq \delta-\tau \right] &\geq \operatorname{Pr}\left[ \phi(r_l^*)-\phi(r_l^{-}) \geq \delta \right] \times \operatorname{Pr}\left[ \phi(r_l^*)-\phi(r_l) \leq \tau \right] \\
        &\geq (1-\lambda_\delta e^{-l}) \times \lambda_\tau e^{-l} \\
        &= \lambda_\tau e^{-l} - \lambda_\tau \lambda_\delta e^{-2l}.
        \end{aligned}
    \end{equation}
    This finishes the proof.

\end{proof}

With theorem~\ref{thm:lower bound of distinguishable condition}, we can derive the best setting of rollout step $\gamma$ in Lookahead Search. 
\begin{corollary}
    (best setting of rollout step $\gamma$ in Lookahead Search.) The best setting of rollout step $\gamma$ in the $l$-th layer in Lookahead Search is:
    \begin{equation}
        \gamma^{*}_l = \max(\ln2\lambda_\delta - l,0).
    \end{equation}
\end{corollary}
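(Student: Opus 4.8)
The plan is to treat the rollout depth $\gamma$ as a parameter that shifts the effective layer index at which the distinguishable condition is evaluated, and then to optimize the lower bound from \cref{thm:lower bound of distinguishable condition} over $\gamma \geq 0$. Concretely, performing a Lookahead Search that rolls out $\gamma$ additional steps before scoring a thought at layer $l$ amounts to assessing distinguishability at layer $l+\gamma$; hence the guaranteed probability of a correct selection is lower bounded by $h(\gamma) := \lambda_\tau e^{-(l+\gamma)} - \lambda_\tau\lambda_\delta e^{-2(l+\gamma)}$, valid whenever $l+\gamma \geq \ln\lambda_\tau$ (the regime required by the theorem). The optimal rollout depth is then the maximizer of $h$ on $[0,\infty)$.

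First I would differentiate $h$ with respect to $\gamma$, obtaining $h'(\gamma) = \lambda_\tau e^{-(l+\gamma)}\bigl(2\lambda_\delta e^{-(l+\gamma)} - 1\bigr)$. The bracketed factor is positive exactly when $l+\gamma < \ln(2\lambda_\delta)$ and negative when $l+\gamma > \ln(2\lambda_\delta)$, so $h$ increases and then decreases, with a unique unconstrained maximizer at $l+\gamma = \ln(2\lambda_\delta)$, i.e. $\gamma = \ln(2\lambda_\delta) - l$. The sign analysis of $h'$ (equivalently a second-derivative check) confirms this is a genuine maximum rather than an inflection point.

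Next I would impose the feasibility constraint $\gamma \geq 0$. If $\ln(2\lambda_\delta) - l \geq 0$, the unconstrained optimum is already feasible and $\gamma^{*}_l = \ln(2\lambda_\delta) - l$. If instead $\ln(2\lambda_\delta) - l < 0$, then every feasible $\gamma$ satisfies $l+\gamma \geq l > \ln(2\lambda_\delta)$, where $h$ is strictly decreasing, so the constrained optimum is attained at the boundary $\gamma^{*}_l = 0$. Combining the two cases yields $\gamma^{*}_l = \max\bigl(\ln(2\lambda_\delta) - l,\, 0\bigr)$, as claimed; I would also note that at this choice $l+\gamma^{*}_l = \max(\ln(2\lambda_\delta),\, l)$ and check that it lies in the admissible range $\geq \ln\lambda_\tau$ so that \cref{thm:lower bound of distinguishable condition} applies.

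The main obstacle is not the calculus — which is a one-line optimization — but the modeling step connecting the rollout depth $\gamma$ to a shift of the layer index in the distinguishability bound: one must argue that a length-$\gamma$ lookahead inherits the accuracy profile of \cref{lm:prob of correct thought} at effective depth $l+\gamma$ rather than $l$, and one should be explicit that what is being maximized is a guaranteed lower bound on the selection probability (a surrogate), so "best setting" is to be understood in that worst-case-guarantee sense. A minor secondary point is that $\gamma$ is physically an integer, so the continuous optimum should in practice be rounded, but the stated closed form is the continuous relaxation.
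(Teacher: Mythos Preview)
Your proposal is correct and follows essentially the same approach as the paper: substitute $l\mapsto l+\gamma$ into the lower bound of \cref{thm:lower bound of distinguishable condition}, differentiate the resulting one-variable function, solve for the unconstrained critical point $\gamma=\ln(2\lambda_\delta)-l$, and then clip at zero. Your treatment is in fact slightly more careful than the paper's (you verify via the sign of $h'$ that the critical point is a maximum and handle the boundary case explicitly), and your caveats about the modeling step and the integer nature of $\gamma$ are apt additions.
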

\begin{proof}
    Evaluating the thought in the $l$-th layer by looking ahead for $\gamma$ steps, the lower bound of distinguishable condition is:
    \begin{equation}
        f(\gamma_l)=\lambda_\tau e^{-(l+\gamma_l)} - \lambda_\tau \lambda_\delta e^{-2(l+\gamma_l)}.
    \end{equation}
    the derivation of $f(\gamma_l)$ is:
    \begin{equation}
        \begin{aligned}
        \frac{\dd}{\dd \gamma_l} f(\gamma_l) &= -\lambda_\tau e^{-(l+\gamma_l)} + 2\lambda_\tau \lambda_\delta e^{-2(l+\gamma_l)} \\
        &= -\lambda_\tau e^{-(l+\gamma_l)}(1-2\lambda_\delta e^{-l}).
        \end{aligned}
    \end{equation}
    Let $\frac{\dd}{\dd \gamma_l} f(\gamma_l)=0$, we have:
    \begin{equation}
        \tilde{\gamma_l} = \ln2\lambda_\delta - l.
    \end{equation}
    Considering the practical meaning, we have
    \begin{equation}
        \gamma^*_l=\max(\tilde{\gamma_l},0)  =\max(\ln2\lambda_\delta - l,0).
    \end{equation}
    This finishes the proof. 
\end{proof}
This corollary shows that the best setting of rollout step $\gamma$ in Lookahead Search is related to the layer index $l$, and not always the larger the better. 

\end{document}